\newtheorem{theorem}{Theorem}
\newtheorem{lemma}{Lemma}
\newtheorem{definition}{Definition}
\newtheorem{proposition}{Proposition}
\newtheorem{assumption}{Assumption}
\title{Efficient Approximation of Volterra Series for High-Dimensional Systems}
\author[1,2]{Navin Khoshnan\thanks{navin.khoshnan@cgu.edu}}
\author[1,2]{Claudia K Petritsch}
\author[1,2]{Bryce Allen Bagley}
\affil[1]{Mathematical Medicine Group, Stanford University}
\affil[2]{Department of Neurosurgery, Stanford Medical School}
\begin{document}

\maketitle

\begin{abstract}
    The identification of high-dimensional nonlinear dynamical systems via the Volterra series has significant potential, but has been severely hindered by the curse of dimensionality. Tensor Network (TN) methods such as the Modified Alternating Linear Scheme (MVMALS) have been a breakthrough for the field, offering a tractable approach by exploiting the low-rank structure in Volterra kernels. However, these techniques still encounter prohibitive computational and memory bottlenecks due to high-order polynomial scaling with respect to input dimension. To overcome this barrier, we introduce the Tensor Head Averaging (THA) algorithm, which significantly reduces complexity by constructing an ensemble of localized MVMALS models trained on small subsets of the input space. In this paper, we present a theoretical foundation for the THA algorithm. We establish  observable, finite-sample bounds on the error between the THA ensemble and a full MVMALS model, and  we derive an exact decomposition of the squared error. This decomposition is used to analyze the manner in which subset models implicitly compensate for omitted dynamics. We quantify this effect, and prove that correlation between the included and omitted dynamics creates an optimization incentive which drives THA's performance toward accuracy superior to a simple truncation of a full MVMALS model. THA thus offers a scalable and theoretically grounded approach for identifying previously intractable high-dimensional systems. 
\end{abstract}

\section{Introduction}
\label{sec:intro}

The identification and modeling of high-dimensional nonlinear dynamical systems represents a significant challenge across numerous domains in science and engineering. From understanding physiological systems to advanced control engineering and signal processing,\cite{marmarelis_analysis_1978, rugh_nonlinear_1981, korenberg_identification_1996, marmarelis_nonlinear_2004, zhu_behavioral_2016, doyle_identification_2002, favier_nonlinear_2012, wambacq_high-frequency_1999,abbas_adaptive_2005,chatterjee_non-linear_2004,korenberg_orthogonal_1991,palm_volterra_1985,} the ability to characterize and classify input-output relationships without relying on linearization is helpful for capturing various phenomena where memory and nonlinearity are central. For discrete-time, causal, time-invariant systems with fading memory, the Volterra Series (VS) offers a natural extension of linear system theory.\cite{schetzen_volterra_1980, boyd_fading_1985, palm_volterra_1985,bedrosian_output_1971,} Serving as a functional analogue to the Taylor expansion for static functions, the VS generalizes the convolution integral, representing the system output as a sum of multidimensional convolutions. The characterizing functions of these convolutions,  known as Volterra kernels, generalize the linear impulse response of a system to capture higher-order interactions and memory effects. Based on generalizations of the Stone-Weierstrass theorem to functionals, the VS provably acts as a universal approximator.\cite{boyd_fading_1985,schetzen_volterra_1980,palm_volterra_1985} This structure provides a nonparametric representation capable of approximating a variety of nonlinear behaviors, while retaining greater interpretability compared to black-box methods.

However, the practical application of the VS has been severely limited by the curse of dimensionality. For a Multiple-Input Multiple-Output (MIMO) system with \(p\) inputs, memory \(M\), and system degree \(d\),  The number of parameters required to characterize the kernels scales exponentially as \(O((pM)^d)\). This explosion in dimensionality creates challenges across the identification process. Computationally, the time complexity of traditional identification algorithms that operate on the full parameter space, such as direct least squares estimation, scales exponentially with degree \(d\). Memory requirements for storing such large kernels quickly become prohibitive, and the resultant exponential complexity renders traditional VS identification methods intractable for all but the smallest systems.

The recognition that Volterra kernels are inherently high-order tensors has led to a shift in modeling approaches through the introduction of Tensor Network (TN) methods.\cite{kolda_tensor_2009, cichocki_tensor_2017} TN techniques, such as the Tensor Train (TT) format introduced by Oseledets,\cite{oseledets_tensor-train_2011} exploit the low-rank structure often present within these kernels to achieve significant compression. This approach allows for a parametrization that scales linearly rather than exponentially, with the degree \(d\), effectively mitigating the curse of dimensionality associated with the system's size and complexity. The Modified Alternating Linear Scheme (MVMALS) algorithm, adapted for Volterra series identification by Batselier et al.,\cite{batselier_tensor_2017} provides an efficient and numerically stable method for identifying VS models of systems within this compressed TN framework. Despite these important advances, a critical bottleneck still remains for systems with high input dimension. While the exponential dependence on \(d\) is resolved, the time and memory complexity of MVMALS scales with the input space polynomially as \(p^6\). In areas characterized by large input dimensionality such as biological systems or large-scale sensor arrays, this high-order polynomial complexity remains prohibitive. 

To address this scalability gap, we introduce the Tensor Head Averaging (THA) algorithm. THA utilizes a structure analogous to the MapReduce paradigm\cite{map_reduce} in order to drastically reduce complexity for systems with large input dimensionality. Similar to the ``map" phase, THA constructs an ensemble of smaller, localized TN models known as heads, where each head is trained via MVMALS on a distinct small subset of the input variables. The final prediction is obtained by averaging the outputs of these individual heads, corresponding to the ``reduce" phase. This approach replaces  the dependence on the full input dimension \(p\) with one on the much smaller subset size \(k \ll p\), enabling the analysis of systems previously considered to be intractable.

A phenomenon inherent to this localized approach is a form of omitted variable bias, which we term ``baking." When a subset model is optimized during training, it does not merely capture the truncated dynamics corresponding to the included variables. Instead, the optimization process inherently adjusts the parameters of the subset model to compensate for the dynamics of the omitted variables, provided there is correlation in the data. This process effectively ``bakes" the influence of the global system dynamics into the localized model parameters. Understanding this mechanism is necessary to properly characterize the approximation capabilities of the THA ensemble.

This paper establishes the theoretical results for the THA algorithm, focusing on the approximation error of the THA ensemble relative to a full MVMALS model, and the  optimization dynamics that induce baking. The primary contributions of this paper are summarized as follows: 

\begin{enumerate}
    \item \textbf{Observable Error Bound:} We establish a numerically computable bound on the error between the THA prediction and that of a full MVMALS model. The bound is in a finite-sample setting without assumptions of convexity or access to global optimizers. This bound is not an a priori error prediction, but rather allows one to understand the sources of error in a THA approximation.
    \item \textbf{Geometric Error Decomposition and Baking Gain:} Using the geometry induced by the empirical Frobenius norm, we derive an exact decomposition of the squared error. 
    \item \textbf{Optimization Behavior and the Incentive for Baking:} We analyze the MVMALS optimization process within the THA context. We prove that under standard regularity conditions aligned with MALS theory, the correlation between included and omitted dynamics creates a gradient incentive that drives the optimization away from a simple truncation, provided this correlation is not orthogonal to the tangent space of the TN manifold. We further show that the condition preventing this incentive is a measure-zero event.
    \item \textbf{Computational Complexity:} We analyze the computational complexity of THA, demonstrating a significant reduction in computational cost compared to the full MVMALS approach.
\end{enumerate}

Section~\ref{sec:preliminaries} establishes the necessary notation and background on Volterra series and tensor networks. Section~\ref{sec:tha_framework} formally defines the THA algorithm and analyzes its computational complexity. Section~\ref{sec:error_bounds} presents the main results on the observable error bounds and the geometric error decomposition. Section~\ref{sec:optimization_main} provides a look into the optimization dynamics and the incentive for baking.

\section{Preliminaries}
\label{sec:preliminaries}

Here we review the notation and basic concepts that are foundational to MIMO Volterra series, their tensor representation, and the empirical measures used for later analysis.

\subsection{MIMO Volterra Series and Empirical Norms}

We consider a discrete-time dynamical system with \(p\) inputs and \(l\) outputs, observed over \(t=1,\dots,N\). The system is modeled by a finite Volterra series (VS) of degree \(d\) and memory \(M\).

The input vector \(\mathbf{x}_t \in \mathbb{R}^{q}\), where \(q=pM+1\), aggregates past inputs \(\mathbf{u}(t)\in\mathbb{R}^p\), augmented by a constant:

    \[
    \mathbf{x}_t \coloneqq \big[1, \mathbf{u}(t)^\top, \mathbf{u}(t-1)^\top, \dots,\mathbf{u}(t-M+1)^\top \big]^\top
    \]
    

The degree-\(d\) Volterra feature vector \(\boldsymbol{\phi}_t \in \mathbb{R}^{Q}\), where \(Q=q^d\), is the \(d\)-th order Kronecker power: \(\boldsymbol{\phi}_t \coloneqq \mathbf{x}_t^{\otimes d}\).

The VS is linear in its parameters, organized into a coefficient matrix \(B \in \mathbb{R}^{Q\times l}\). The system response is \(\mathbf{y}(t)^\top = \boldsymbol{\phi}_t^\top B\). We define the design matrix \(U\in\mathbb{R}^{N\times Q}\) as \(U = [\boldsymbol{\phi}_1, \dots, \boldsymbol{\phi}_N]^\top\) and the output matrix \(Y = [\mathbf{y}(1), \dots, \mathbf{y}(N)]^\top \in \mathbb{R}^{N\times l}\). The identification problem is estimating \(B\) from the linear system \(Y = U B\). The exponential scaling of \(Q\) motivates the use of compressed representations.

To analyze the finite-sample approximation error, we define the following:

\begin{definition}[Empirical Frobenius Norm and Inner Product]
The empirical Frobenius norm for \(A \in \mathbb{R}^{N\times l}\) is \(\|A\|_{F,N} \coloneqq \frac{1}{\sqrt{N}}\|A\|_F\). The associated inner product between \(A, C \in \mathbb{R}^{N\times l}\) is \(\langle A, C \rangle_{F,N} \coloneqq \frac{1}{N}\mathrm{Tr}(A^\top C)\).
\end{definition}

\begin{definition}[Empirical Gram Matrix]
The empirical Gram matrix \(G\in\mathbb{R}^{Q\times Q}\) of the features is \(G \ \coloneqq \frac{1}{N}U^\top U\).
\end{definition}
\noindent The matrix \(G\) is positive semidefinite (\(G \succeq 0\)). Its spectral properties characterize the richness of the input data and are important for the error bounds established later.

\subsection{Tensor Networks and MVMALS}

To address the high dimensionality of the coefficient matrix \(B \in \mathbb{R}^{Q\times l}\), we employ a compressed representation based on the TN formalism, related to the TT format (Oseledets, 2011). The matrix \(B\) is viewed as the matricization of a \((d+1)\)-order tensor \(\mathcal{B} \in \mathbb{R}^{l \times q \times \cdots \times q}\), which is represented as a contracted product of lower-order tensors (cores).

We adopt a TN parametrization where \(\mathbf{r}=(r_1, \dots, r_{d-1})\) are the TN-ranks, with boundary ranks \(r_0=l\) (outputs) and \(r_d=1\). The parametrization utilizes \(d\) cores, \(\mathcal{U}=(V^{(1)}, \dots, V^{(d)})\), where the \(k\)-th core is \(V^{(k)} \in \mathbb{R}^{r_{k-1}\times q\times r_k}\). A multilinear map \(\tau\) constructs the full tensor \(\mathcal{B} = \tau(\mathcal{U})\) via contraction (see Appendix~\ref{app:tn_details} for the explicit formula).

If \(\rho = \max_k r_k\) is the maximal TN-rank, the TN format requires \(O(d q \rho^2 + lq\rho)\) parameters. This scales linearly with the degree \(d\), offering significant compression compared to the \(O(l q^d)\) parameters of the dense representation.

The set of tensors with ranks bounded by \(\mathbf{r}\) is denoted \(\mathcal{T}_{\le \mathbf{r}}\). The subset with ranks exactly \(\mathbf{r}\), denoted \(\mathcal{T}_{\mathbf{r}}\). It is established that this set forms a smooth, embedded submanifold under standard regularity conditions,\cite{holtz_manifolds_2012, uschmajew_geometry_2013} and this geometric structure is crucial for the analysis of optimization on the TN manifold.

The identification problem is formulated as the minimization of the empirical loss subject to the TN structure:
\begin{equation}
\min_{B \in \mathcal{T}_{\le \mathbf{r}}} L(B) \coloneqq \|Y - UB\|_{F,N}^2.
\label{eq:optimization_problem}
\end{equation}

This is a non-convex optimization problem due to the multilinearity of \(\tau\).

We use the MVMALS algorithm introduced by Batselier et al.\cite{batselier_tensor_2017} to find a local solution to Eq.~\eqref{eq:optimization_problem}. MVMALS is an adaptation of the Alternating Least Squares (ALS) principle that iteratively optimizes the cores by solving reduced linear least-squares problems. Holtz et al., have shown it ensures numerical stability through orthogonalization constraints\cite{holtz_alternating_2012} and Rohwedder \& Uschmajew have shown it guarantees monotonic convergence to a block-stationary point\cite{rohwedder_local_2013}.

\begin{definition}[Full MVMALS Model Parameters \(B^\star\)]
We define the full MVMALS model as the model identified by applying the MVMALS algorithm to the dataset \((U, Y)\) utilizing the complete set of \(p\) input variables. The resulting parameters constitute the coefficient matrix \(B^\star \in \mathcal{T}_{\mathbf{r}} \subset \mathbb{R}^{Q\times l}\). The predictions of this full model are denoted as \(\widehat{Y}^{\mathrm{full}} = U B^\star\).
\end{definition}

\noindent Due to the non-convexity of the optimization landscape, \(B^\star\) represents a local stationary point of the loss function \(L(B)\) on the manifold \(\mathcal{T}_{\mathbf{r}}\).

\section{The THA Algorithm}
\label{sec:tha_framework}

We now establish the notation and definitions for the THA algorithm, which approximates a high-dimensional Volterra system by an ensemble of lower-dimensional models, each identified using the Modified Alternating Linear Scheme within a tensor network format.

\subsection{THA Construction and Prediction}

THA  mitigates the computational complexity associated with the dimension \(Q\) by constructing an ensemble of \(K\) smaller models, referred to as ``heads."

Let \(\mathcal{I} = \{1, \dots, p\}\) be the index set of the input variables. We define a collection of \(K\) subsets of these indices, \(\{S_k\}_{k=1}^K\), where \(S_k \subset \mathcal{I}\).

For each subset \(S_k\), a corresponding model is trained using only the input variables indexed by \(S_k\). This identification is performed using MVMALS\cite{batselier_tensor_2017}, resulting in a parameter matrix \(\widehat{B}^{(k)} \in \mathbb{R}^{Q\times l}\). These parameters are embedded in the full space, such that \(\widehat{B}^{(k)}\) is zero for any monomial in \(\phi_t\) involving variables not indexed by \(S_k\).

The THA ensemble prediction is formed by a weighted average of the individual head predictions. Let \(\omega_1,\dots,\omega_K\ge 0\) be the ensemble weights, satisfying the convexity constraint \(\sum_{k=1}^K\omega_k=1\). The aggregate THA prediction \(\widehat{Y}^{\mathrm{THA}} \in \mathbb{R}^{N\times l}\) is defined as:

    \[
        \widehat{Y}^{\mathrm{THA}} \ \coloneqq \sum_{k=1}^K \omega_k\,(U\,\widehat{B}^{(k)}).
    \]

\subsection{Masking and Coverage}

To analyze the relationship between the subset models and the full parameter space, we introduce operators that formalize the restriction of the feature space to specific subsets
.
\begin{definition}[Mask Operator \(P_{S_k}\)]
For a subset of input indices \(S_k\), the Mask Operator \(P_{S_k} \in \mathbb{R}^{Q\times Q}\) is a diagonal projection matrix. The diagonal entries are defined as:
    
    \[
        (P_{S_k})_{jj} = \begin{cases} 1 & \text{if the } j\text{-th monomial in } \phi_t \text{ only involves variables indexed by } S_k \text{ (including the constant 1)}, \\ 0 & \text{otherwise}. \end{cases}
    \]
\end{definition}

By construction, the parameters of the \(k\)-th head satisfy the constraint \(P_{S_k}\widehat{B}^{(k)} = \widehat{B}^{(k)}\).

The collective effect of the ensemble weighting on the representation of the full feature space is captured by the Coverage Operator.

\begin{definition}[Coverage Operator \(C_\omega\)]

Given the ensemble weights \(\{\omega_k\}\) and the mask operators \(\{P_{S_k}\}\), the Coverage Operator \(C_\omega \in \mathbb{R}^{Q\times Q}\) is the weighted average of the masks:
    \[
        C_\omega  \coloneqq \sum_{k=1}^K \omega_kP_{S_k}.
    \]

\end{definition}

The Coverage Operator \(C_\omega\) is diagonal and positive semidefinite. Since each \(P_{S_k}\) is a projection matrix and the weights are convex, \(C_\omega\) satisfies the operator inequality \(0\preceq C_\omega \preceq I\). This operator quantifies the extent to which each monomial is represented within the ensemble.

\subsection{Pseudocode For the THA Algorithm}

We now outline the steps for the construction and evaluation of the THA ensemble presented in Algorithm~\ref{alg:tha}. For a system with a large input space, the space of possible subsets is vast, making exhaustive enumeration infeasible. A selection strategy is employed to choose \(K\) subsets. THA allows flexibility in the selection strategy, with some examples being random sampling, weighted sampling, or criteria specific to domain of application. 

In the training phase, \(K\) localized models are identified using MVMALS on their respective input subsets \(S_k\). The independence of these tasks allows for them to potentially be run in parallel, and the simultaneous training of all \(K\) models offers a significant reduction in the wall-clock time required for ensemble construction. The minimization of error between the localized model prediction and full system output is what pushes for baking to occur. 

The strategy for optimizing subset weights is also flexible, generally done on a separate validation dataset \(Y_{val}\) to prevent overfitting. The trained heads are then simulated on the validation data, and the weights \(\omega_k\) are determined. The outline showcases this through the implementation of constrained least squares as an example strategy.

Finally, the outputs of the trained heads are aggregated using the optimized weights for inference on new data. 

\begin{algorithm}
\caption{Tensor Head Averaging (THA)}\label{alg:tha}
\begin{algorithmic}[1]
\Require Input series \(\{\mathbf{u}(t)\}_{t=1}^N \), Full system output \(Y\) 
\Require System parameters (Memory \(M\), Degree \(d\), Input dimension \(p\))
\Require Ensemble size \(K\) Subset selection strategy \(\mathcal{S}\).
\Require MVMALS Hyperparameters (Tolerance \(\epsilon\), Max Rank \(\rho_{\max}\)).

\vspace{2mm}

\Function{SelectSubsets}{$\mathcal{S}, K, p$}
    \State \(\{S_k\}_{k=1}^K \leftarrow \text{ApplyStrategy}(\mathcal{S}, K, p)\) \Comment{Determine \(K\) subsets}
    \State \Return \(\{S_k\}_{k=1}^K\)
\EndFunction

\vspace{4mm}
    
\Function{TrainHeads}{$ \{\mathbf{u}(t)\}_{train}, Y_{train}, M, d, K, \{S_k\}_{k=1}^K, \epsilon, \rho_{max}$ }
    \State Initialize ensemble \(\mathcal{E} \leftarrow \emptyset\).
    \For{\(k=1\) to \(K\)} 
        \State \(\{\mathbf{u}^{(k)}(t)\} \leftarrow \mathbf{u}(t)[S_k]\) for \(t\) \(\in\) \text{train} \Comment{ Restrict inputs to the subset \(S_k\)}

        \State \(\mathcal{B}^{(k)} \leftarrow \text{MVMALS}(\{\mathbf{u}^{(k)}(t)\}, Y_{train}, M, d, \epsilon, \rho_{\max})\) 
        
        \State \(\mathcal{E} \leftarrow \mathcal{E} \cup \{\mathcal{B}^{(k)}\}\) \Comment{Store model in TN format}
    \EndFor
    \State \Return \(\mathcal{E}\) \Comment{Ensemble of localized TN models}
\EndFunction

\vspace{4mm}

\Function{OptimizeWeights}{$\{\mathbf{u}(t)\}_{val}, Y_{val}, \mathcal{E}, \{S_k\}$}
    \For{\(k=1\) to \(K\)}
        \State Retrieve \(\mathcal{B}^{(k)}\) from \(\mathcal{E}\).
        \State \(\{\mathbf{u}^{(k)}(t)\} \leftarrow \mathbf{u}(t)[S_k]\) for \(t\) \(\in\) \text{val}
        \State \(\widehat{Y}^{(k)}_{val} \leftarrow \text{SimulateTNVS}(\mathcal{B}^{(k)}, \{\mathbf{u}^{(k)}(t)\})\) \Comment{Simulate on validation data}
    \EndFor
    \State \(\{\omega_k\}_{k=1}^K \leftarrow \arg\min_{\omega} \| Y_{val} - \sum_k \omega_k \widehat{Y}^{(k)}_{val} \|_F^2 \quad \text{s.t. } \sum \omega_k = 1, \omega_k \ge 0\). \Comment{Example strategy}
    \State \Return \(\{\omega_k\}_{k=1}^K\)
    
\EndFunction

\vspace{4mm}

\Function{THA\_Predict}{$\{\mathbf{u}(t)\}_{test}, \mathcal{E}, \{\omega_k\}, \{S_k\}$}
    \State Initialize \(\widehat{Y}^{\mathrm{THA}} \leftarrow \mathbf{0}\).
    \For{\(k=1\) to \(K\)} 
        \State Retrieve \(\mathcal{B}^{(k)}\) from \(\mathcal{E}\).
        \State \(\mathbf{u}^{(k)}(t) \leftarrow \mathbf{u}(t)[S_k]\). for \(t\) \(\in\) \text{test}
        \State \(\widehat{Y}^{(k)} \leftarrow \text{SimulateTNVS}(\mathcal{B}^{(k)}, \{\mathbf{u}^{(k)}(t)\})\)
        \State \(\widehat{Y}^{\mathrm{THA}} \leftarrow \widehat{Y}^{\mathrm{THA}} + \omega_k\,\widehat{Y}^{(k)}\)
    \EndFor
    
    \State \Return \(\widehat{Y}^{\mathrm{THA}}\)
	\EndFunction

\end{algorithmic}
\end{algorithm}

\subsection{Computational Complexity Analysis}
\label{subsec:complexity}

We analyze the computational complexity by comparing the cost of applying MVMALS to the full system versus the cost of the THA ensemble. The complexity of MVMALS is dominated by the least-squares solves and SVD operations performed during the iterative updates of the tensor cores (see Appendix~\ref{app:complexity_details} for a detailed derivation).

For the full system with input dimension \(p\), memory \(M\), and maximal TN-rank \(\rho\), the augmented input dimension is \(m = pM + 1\). Assuming convergence in \(s\) sweeps, the total complexity for the full MVMALS identification is:

\begin{equation} \label{eq:C_total}
\mathcal{C}_{\rm total} =O\bigl(s(d-1)[Nlm^{4}\rho^{4} + m^{6}\rho^{6} + m^{3}\rho^{3}]\bigr).
\end{equation}

The complexity exhibits a high-order polynomial dependence on \(p\), since \(m=O(pM)\). The dominant term scales as \(O(p^6 M^6)\), which represents the primary computational bottleneck.

The THA algorithm addresses this by decomposing the task into \(K\) smaller problems. With a uniform subset size \(k \ll p\) and uniform complexity bounds (\(s_{\max}, \rho_{\max}\)), the total complexity for THA is:

    \[
        \mathcal C_{\text{THA}} = O\bigl(K s_{\max} (d-1)[Nl(kM)^4\rho_{\max}^4 + (kM)^6\rho_{\max}^6 + (kM)^3\rho_{\max}^3]\bigr).
    \]

The \(O(p^6)\) dependence is replaced by \(O(K k^6)\). This allows THA to tackle identification problems involving a large number of inputs \(p\) that would be intractable using the full MVMALS approach.

\section{Observable Finite-Sample Error Bounds}
\label{sec:error_bounds}

In this section, we analyze the approximation error between the THA ensemble and the full MVMALS model. We focus on observable quantities derived from a finite dataset and the identified parameters, without relying on assumptions about the underlying data generation process or global optimality of MVMALS solutions. The purpose of this is to decompose the total error into components which elucidate the effects of incomplete coverage and the optimization behavior, where baking occurs. 

\subsection{The Truncated THA Baseline}

To isolate the effect of the optimization process, where the parameters of the subset models are adjusted to compensate for omitted dynamics, we introduce a baseline model that represents the ensemble without this adjustment. This baseline which we call the truncated THA model, is constructed by directly applying the subset masks to the full MVMALS model parameters. 

\begin{definition}[Truncated THA Model]

Let \(B^\star\) be the parameters of the full MVMALS model. The truncated THA model prediction, \(\widehat{Y}^{\mathrm{Trunc}} \in \mathbb{R}^{N\times l}\), is defined as the weighted average of the predictions generated by the truncated coefficients of the full model for each subset \(S_k\):
    \[
        \widehat{Y}^{\mathrm{Trunc}} \coloneqq \sum_{k=1}^K \omega_k \big(U P_{S_k} B^\star\big).
    \]
    
By linearity and the definition of the Coverage Operator \(C_\omega = \sum_{k=1}^K \omega_k P_{S_k}\), this simplifies to \(\widehat{Y}^{\mathrm{Trunc}} = U C_\omega B^\star.\)
\end{definition}

The truncated THA model represents the prediction that would be achieved if the THA heads were simply the corresponding components of the full model, without any additional optimization.

\subsection{Error Decomposition Components}

We now decompose the total error \(\widehat{Y}^{\mathrm{THA}} - \widehat{Y}^{\mathrm{full}}\) relative to the truncated THA baseline, using the empirical Frobenius norm \(\|\cdot\|_{F,N}\) and its associated inner product \(\langle \cdot, \cdot \rangle_{F,N}\).

The first source of error stems from the incomplete representation of the full system dynamics due to the subset strategy. Here, The term ``tail" is used analogously to the ensemble heads. While the heads collectively capture the dynamics represented by \(C_\omega B^\star\), the tail \(T\) then represents the complementary dynamics omitted from the ensemble.

\begin{definition}[Tail and Empirical Coverage Bias]
The tail \(T \in \mathbb{R}^{Q\times l}\) is the matrix of under-represented coefficients of the full model \(B^\star\) under the THA coverage \(C_\omega\):

    \[
        T\ \coloneqq (I-C_\omega)B^\star.
    \]

\end{definition}

\noindent The magnitude \(\|T\|_F\) quantifies the extent of the parameters omitted or down-weighted by the ensemble. If the coverage improves (e.g., by increasing \(K\)) such that the new coverage \(C_{\omega^+}\) satisfies \(C_{\omega^+} \succeq C_\omega\), then \(\|T\|\) decreases monotonically.

The impact of the tail on the prediction error is captured by the empirical coverage bias.

\begin{definition}[Empirical Coverage Bias]
The Empirical Coverage Bias \(\Delta_{\text{Bias}} \in \mathbb{R}^{N\times l}\) is the error of the Truncated THA model compared to the full MVMALS model.

\[
    \Delta_{\text{Bias}} \coloneqq \widehat{Y}^{\mathrm{Trunc}} - \widehat{Y}^{\mathrm{full}} = U(C_\omega-I)B^\star = -UT.
\]

\end{definition}


The second component of the error relates to the deviation of the optimized THA model from the truncated baseline.

\begin{definition}[Empirical Baking Adjustment]
The Empirical Baking Adjustment \(\Delta_{\text{Bake}} \in \mathbb{R}^{N\times l}\) is the difference between the actual THA prediction and the Truncated THA prediction:
    \[
        \Delta_{\text{Bake}} \coloneqq \widehat{Y}^{\mathrm{THA}} - \widehat{Y}^{\mathrm{Trunc}}.
    \]
\end{definition}

The magnitude \(\|\Delta_{\text{Bake}}\|_{F,N}\) measures the aggregate effect of the parameter adjustments made during the MVMALS optimization of the subset heads.

The optimization process aims to minimize prediction error, which incentivizes \(\Delta_{\text{Bake}}\) to compensate for \(\Delta_{\text{Bias}}\). Specifically, the optimization drives \(\Delta_{\text{Bake}}\) towards the required correction \(UT = -\Delta_{\text{Bias}}\). We quantify the effectiveness of this compensation through the concept of alignment.

\begin{definition}[Baking Alignment]
The Baking Alignment \(C_{\text{Align}}\) measures the inner product between the Empirical Baking Adjustment (\(\Delta_{\text{Bake}}\)) and the target correction (\(-\Delta_{\text{Bias}}\)), using the empirical Frobenius inner product:

    \[
        C_{\text{Align}} \coloneqq \langle \Delta_{\text{Bake}}, -\Delta_{\text{Bias}} \rangle_{F,N} = \langle \Delta_{\text{Bake}}, UT \rangle_{F,N}.
    \]
\end{definition}

A positive value of \(C_{\text{Align}}\) indicates that the baking adjustment is successfully counteracting the coverage bias, thereby reducing the total error.


Finally, we define a measure that captures the deviation between the parameters of the optimized heads and the truncated full model, as observed through their predictions.

\begin{definition}[Observable Truncation Gap (\(\varepsilon'_k\))]
For each head \(k\), the observable truncation gap \(\varepsilon'_k \ge 0\) measures the empirical norm of the difference between the prediction of the trained head \(\widehat{B}^{(k)}\) and the prediction of the truncated full model \(P_{S_k}B^\star\):
    \[
        \varepsilon'_k \coloneqq \Big\|U \widehat{B}^{(k)}\ - U P_{S_k}B^\star\Big\|_{F,N}.
    \]
\end{definition}

This quantity bounds the magnitude of the baking adjustment. The Empirical Baking Adjustment can be expressed as a weighted average of the individual head deviations \(E'_k = U\widehat{B}^{(k)} - U P_{S_k} B^\star\):

    \[
        \Delta_{\text{Bake}} = \sum_{k=1}^K \omega_k E'_k.
    \]
    
By the convexity of the norm and Jensen's inequality, we have the bound:

    \[
        \|\Delta_{\text{Bake}}\|_{F,N}^2 \le \left(\sum_{k=1}^K \omega_k \varepsilon'_k\right)^2 \le \sum_{k=1}^K \omega_k (\varepsilon'_k)^2.
    \]
    
This relationship establishes that the magnitude of the aggregate baking adjustment is bounded by the mean squared observable truncation gaps.

\subsection{Main Result: Observable Bounds}

We now present the main theoretical results concerning the approximation error of the THA algorithm. We begin with an exact geometric decomposition of the squared error, which explicitly quantifies the contribution of the baking mechanism. Subsequently, we derive two observable upper bounds on the approximation error.

First, we present the exact geometric decomposition, which follows from the structure of the inner product space defined by the empirical Frobenius norm.

\begin{theorem}[Observable Finite-Sample THA Error Decomposition with Baking Gain]
\label{thm:geometric_decomposition}
Let \(\widehat{Y}^{\mathrm{full}}\) be the prediction of the full MVMALS model, and \(\widehat{Y}^{\mathrm{THA}}\) be the prediction of the THA ensemble. The squared normalized empirical Frobenius error between these predictions is given exactly by the following geometric decomposition:

    \[
        \big\|\widehat{Y}^{\mathrm{THA}}-\widehat{Y}^{\mathrm{full}}\big\|_{F,N}^2 = \underbrace{\|\Delta_{\text{Bias}}\|_{F,N}^2}_{\textbf{Coverage Bias (Truncated Error)}} + \underbrace{\|\Delta_{\text{Bake}}\|_{F,N}^2}_{\textbf{Baking Adjustment Magnitude}} - \underbrace{2 C_{\text{Align}}}_{\textbf{Baking Gain}}.
    \]
    
All terms in this decomposition are observable.
\end{theorem}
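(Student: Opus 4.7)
The plan is to observe that this is a straightforward application of the polarization identity (law of cosines) in the inner product space induced by $\langle\cdot,\cdot\rangle_{F,N}$, so the entire content of the theorem is in (i) writing the total error as a sum of $\Delta_{\text{Bias}}$ and $\Delta_{\text{Bake}}$ and (ii) tracking the sign of the cross term so that it is expressed as the stated Baking Gain $-2C_{\text{Align}}$ rather than $+2\langle\Delta_{\text{Bake}},\Delta_{\text{Bias}}\rangle_{F,N}$.

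First I would establish the additive decomposition of the error vector. By the definitions of $\Delta_{\text{Bias}}$ and $\Delta_{\text{Bake}}$,
\[
\widehat{Y}^{\mathrm{THA}}-\widehat{Y}^{\mathrm{full}}
= \bigl(\widehat{Y}^{\mathrm{THA}}-\widehat{Y}^{\mathrm{Trunc}}\bigr) + \bigl(\widehat{Y}^{\mathrm{Trunc}}-\widehat{Y}^{\mathrm{full}}\bigr)
= \Delta_{\text{Bake}} + \Delta_{\text{Bias}},
\]
so the truncated THA model serves simply as an intermediate reference point between the THA and full predictions. This step is purely algebraic and uses nothing beyond the three definitions given in Section~\ref{sec:error_bounds}.

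Next I would apply the polarization identity of $\|\cdot\|_{F,N}$, which is inherited from the standard Frobenius structure on $\mathbb{R}^{N\times l}$ up to the global $1/N$ factor, to obtain
\[
\|\Delta_{\text{Bake}}+\Delta_{\text{Bias}}\|_{F,N}^2 = \|\Delta_{\text{Bake}}\|_{F,N}^2 + \|\Delta_{\text{Bias}}\|_{F,N}^2 + 2\langle\Delta_{\text{Bake}},\Delta_{\text{Bias}}\rangle_{F,N}.
\]
Finally I would rewrite the cross term using the definition of Baking Alignment: since $C_{\text{Align}} = \langle\Delta_{\text{Bake}},-\Delta_{\text{Bias}}\rangle_{F,N}$, bilinearity gives $\langle\Delta_{\text{Bake}},\Delta_{\text{Bias}}\rangle_{F,N} = -C_{\text{Align}}$, which converts the $+2$ cross term into the claimed $-2C_{\text{Align}}$ Baking Gain. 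Observability of each term then follows immediately: $\Delta_{\text{Bias}} = -UT$ with $T=(I-C_\omega)B^\star$ is computable from $B^\star$, $U$, and the ensemble weights/masks; $\Delta_{\text{Bake}}$ is computable from the trained heads $\widehat{B}^{(k)}$, the truncations $P_{S_k}B^\star$, and $U$; and $C_{\text{Align}}$ is a finite-sample trace of these two matrices.

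There is no real obstacle here: the only thing to be careful about is the sign convention embedded in the Baking Alignment definition, and the fact that $\|\cdot\|_{F,N}$ is genuinely derived from the inner product $\langle\cdot,\cdot\rangle_{F,N}$ (which is immediate, since $\langle A,A\rangle_{F,N} = \tfrac{1}{N}\mathrm{Tr}(A^\top A) = \tfrac{1}{N}\|A\|_F^2 = \|A\|_{F,N}^2$), so that the polarization identity actually applies. The substantive content of the theorem is conceptual rather than technical: it identifies a geometrically meaningful splitting of the THA error into an intrinsic coverage deficit, a magnitude of the compensating adjustment, and an interaction term whose sign encodes whether the heads' optimization is actually pulling in the direction that offsets the omitted tail $UT$.
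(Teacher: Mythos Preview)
Your proposal is correct and follows essentially the same argument as the paper's proof: both write the total error as $\Delta_{\text{Bake}}+\Delta_{\text{Bias}}$ via the truncated-model intermediate, expand the squared $\|\cdot\|_{F,N}$ norm, and then identify the cross term as $-2C_{\text{Align}}$ from the definition of Baking Alignment. Your additional remarks on the sign convention and on observability are accurate elaborations but do not depart from the paper's route.
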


\begin{proof}[Proof Sketch]

The decomposition relies on the geometric structure of the empirical Frobenius norm. We decompose the total error relative to the Truncated THA baseline:

\[
    \widehat{Y}^{\mathrm{THA}}-\widehat{Y}^{\mathrm{full}} = (\widehat{Y}^{\mathrm{THA}} - \widehat{Y}^{\mathrm{Trunc}}) + (\widehat{Y}^{\mathrm{Trunc}} - \widehat{Y}^{\mathrm{full}}) = \Delta_{\text{Bake}} + \Delta_{\text{Bias}}.
\]

Expanding the squared norm \(\|\Delta_{\text{Bake}} + \Delta_{\text{Bias}}\|_{F,N}^2\) yields the terms \(\|\Delta_{\text{Bake}}\|_{F,N}^2 + \|\Delta_{\text{Bias}}\|_{F,N}^2 + 2 \langle \Delta_{\text{Bake}}, \Delta_{\text{Bias}} \rangle_{F,N}\). The cross-term is exactly \(-2 C_{\text{Align}}\) by definition. See Appendix~\ref{app:proof_thm1} for the full derivation.

\end{proof}

\noindent This decomposition provides some insight into the optimization dynamics. When baking is effective, the optimization ensures \(C_{\text{Align}} > 0\). The term \(2C_{\text{Align}}\) quantifies the reduction in total error achieved by the optimization process, demonstrating how the optimization tightens the error measure.

Next, we derive an upper bound based on the triangle inequality, which provides a clear separation between the error due to coverage and the error due to estimation mismatch.

\begin{theorem}[Observable Finite-Sample THA Bound for MIMO Systems]
\label{thm:triangle_bound}

Let \(\widehat{Y}^{\mathrm{full}} = UB^\star\) and \(\widehat{Y}^{\mathrm{THA}}\) be the predictions of the full MVMALS model and the THA ensemble, respectively. The normalized empirical Frobenius error between these predictions is bounded as follows:

    \[
        \big\|\widehat{Y}^{\mathrm{THA}}-\widehat{Y}^{\mathrm{full}}\big\|_{F,N} \le \underbrace{\sqrt{\lambda_{\max}(G)}\big\|T\big\|_F}_{\textbf{Coverage Bias Bound}} + \underbrace{\Big(\sum_{k=1}^K \omega_k (\varepsilon'_k)^2\Big)^{1/2}}_{\textbf{Aggregated Estimation Gap}}.
    \]

\end{theorem}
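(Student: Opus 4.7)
The plan is to apply the triangle inequality to the decomposition already established in Theorem~\ref{thm:geometric_decomposition} and then bound each of the two resulting terms by an observable quantity. Concretely, writing $\widehat{Y}^{\mathrm{THA}} - \widehat{Y}^{\mathrm{full}} = \Delta_{\text{Bake}} + \Delta_{\text{Bias}}$ and applying the triangle inequality for $\|\cdot\|_{F,N}$ yields
\[
\big\|\widehat{Y}^{\mathrm{THA}} - \widehat{Y}^{\mathrm{full}}\big\|_{F,N} \le \|\Delta_{\text{Bias}}\|_{F,N} + \|\Delta_{\text{Bake}}\|_{F,N},
\]
so it remains to bound each of the two summands by the corresponding term on the right-hand side.

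For the coverage bias term, I would use that $\Delta_{\text{Bias}} = -UT$, so
\[
\|\Delta_{\text{Bias}}\|_{F,N}^2 = \tfrac{1}{N}\|UT\|_F^2 = \tfrac{1}{N}\mathrm{Tr}(T^\top U^\top U T) = \mathrm{Tr}(T^\top G T).
\]
Since $G \succeq 0$, the operator inequality $G \preceq \lambda_{\max}(G) I$ gives column-wise $t_j^\top G t_j \le \lambda_{\max}(G)\, \|t_j\|_2^2$ for each column $t_j$ of $T$; summing over the $l$ output columns yields $\mathrm{Tr}(T^\top G T) \le \lambda_{\max}(G)\|T\|_F^2$, and taking square roots gives the first bound.

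For the aggregated estimation gap, I would simply invoke the Jensen-type inequality already stated immediately before the theorem, namely
\[
\|\Delta_{\text{Bake}}\|_{F,N}^2 \le \sum_{k=1}^K \omega_k (\varepsilon'_k)^2,
\]
which follows from writing $\Delta_{\text{Bake}} = \sum_k \omega_k E'_k$ with $\|E'_k\|_{F,N} = \varepsilon'_k$ and applying Jensen's inequality to the convex map $x \mapsto \|x\|_{F,N}^2$ under the probability weights $\{\omega_k\}$. Taking the square root and combining with the previous bound completes the proof.

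The argument is essentially assembly of already-established ingredients, so I do not expect a genuine obstacle. The only subtlety worth flagging in the write-up is that the spectral bound $\mathrm{Tr}(T^\top G T) \le \lambda_{\max}(G)\|T\|_F^2$ uses the ordinary (unnormalized) Frobenius norm of $T$ on the right-hand side, with the factor of $1/N$ absorbed into $G = \frac{1}{N}U^\top U$; this matches the statement of the theorem, which writes $\|T\|_F$ rather than $\|T\|_{F,N}$, and it is worth stating explicitly to avoid confusion between the two normalizations.
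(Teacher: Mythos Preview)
Your proposal is correct and follows essentially the same route as the paper's own proof: decompose the error as $\Delta_{\text{Bake}} + \Delta_{\text{Bias}}$, apply the triangle inequality, bound $\|\Delta_{\text{Bias}}\|_{F,N} = \|UT\|_{F,N}$ via the Rayleigh-quotient estimate $\mathrm{Tr}(T^\top G T) \le \lambda_{\max}(G)\|T\|_F^2$, and bound $\|\Delta_{\text{Bake}}\|_{F,N}$ by the Jensen inequality already established for the weighted sum of head deviations. Your remark on the normalization (that the $1/N$ is absorbed into $G$, leaving the unnormalized $\|T\|_F$) is accurate and worth keeping.
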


\begin{proof}[Proof Sketch]
The proof utilizes the triangle inequality on the error decomposition: \(\|\widehat{Y}^{\mathrm{THA}}-\widehat{Y}^{\mathrm{full}}\|_{F,N} \le \|\Delta_{\text{Bake}}\|_{F,N} + \|\Delta_{\text{Bias}}\|_{F,N}\).
The bias term \(\|\Delta_{\text{Bias}}\|_{F,N} = \|UT\|_{F,N}\) is bounded using the Rayleigh quotient, yielding the bound \(\sqrt{\lambda_{\max}(G)}\|T\|_F\).
The baking adjustment term \(\|\Delta_{\text{Bake}}\|_{F,N}\) is bounded using Jensen's inequality applied to the weighted average of the individual head deviations, resulting in the Aggregated Estimation Gap term. See Appendix~\ref{app:proof_thm2}.
\end{proof}

We can obtain a tighter bound by combining the geometric decomposition with the bound on the estimation gap.

\begin{theorem}[Improved Bound With Baking Term]

\label{thm:improved_bound} 
Let \(\widehat{Y}^{\mathrm{full}}\) and \(\widehat{Y}^{\mathrm{THA}}\) be the predictions of the full MVMALS and the THA ensemble, respectively. The squared normalized empirical Frobenius error is bounded as follows:

    \[
    \big\|\widehat{Y}^{\mathrm{THA}}-\widehat{Y}^{\mathrm{full}}\big\|_{F,N}^2 \le \underbrace{\|\Delta_{\text{Bias}}\|_{F,N}^2}_{\textbf{Empirical Coverage Bias}} + \underbrace{\sum_{k=1}^K \omega_k (\varepsilon'_k)^2}_{\textbf{Aggregated Estimation Gap (MSE)}} - \underbrace{2\, C_{\text{Align}}}_{\textbf{Baking Gain}}.
    \]

\end{theorem}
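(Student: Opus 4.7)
The plan is to derive Theorem~\ref{thm:improved_bound} by directly combining the exact geometric identity of Theorem~\ref{thm:geometric_decomposition} with the Jensen-type upper bound on the aggregate baking adjustment that was already established in the preamble to Section~4.3. No new geometric analysis or manifold argument is required; the statement is essentially a cleaner hybrid of the two estimates, where the exact cross-term is retained rather than absorbed via the triangle inequality as in Theorem~\ref{thm:triangle_bound}.

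Concretely, I would proceed in three steps. First, I invoke Theorem~\ref{thm:geometric_decomposition} to write the squared error exactly as
\[
\big\|\widehat{Y}^{\mathrm{THA}}-\widehat{Y}^{\mathrm{full}}\big\|_{F,N}^2 = \|\Delta_{\text{Bias}}\|_{F,N}^2 + \|\Delta_{\text{Bake}}\|_{F,N}^2 - 2C_{\text{Align}}.
\]
Second, I bound the middle term. By definition, $\Delta_{\text{Bake}} = \sum_{k=1}^K \omega_k E'_k$ with $E'_k = U\widehat{B}^{(k)} - U P_{S_k} B^\star$ and $\|E'_k\|_{F,N} = \varepsilon'_k$. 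Since the weights are convex ($\omega_k \ge 0$, $\sum_k \omega_k = 1$), Jensen's inequality applied to the convex function $\|\cdot\|_{F,N}^2$ gives
\[
\|\Delta_{\text{Bake}}\|_{F,N}^2 = \Big\|\sum_{k=1}^K \omega_k E'_k\Big\|_{F,N}^2 \le \sum_{k=1}^K \omega_k \|E'_k\|_{F,N}^2 = \sum_{k=1}^K \omega_k (\varepsilon'_k)^2,
\]
which is precisely the inequality already recorded immediately before the statement of the theorem. Third, substituting this upper bound into the geometric identity yields the claim, since the bias and alignment terms are carried through unchanged.

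The proof is essentially routine once both prior results are in hand; the only subtlety worth flagging is that the cross-term $-2C_{\text{Align}}$ is preserved exactly and may be negative, so the resulting right-hand side can in principle be tighter than either Theorem~\ref{thm:geometric_decomposition} alone (which requires knowing $\|\Delta_{\text{Bake}}\|_{F,N}^2$ exactly) or Theorem~\ref{thm:triangle_bound} (which squares the triangle sum and thereby introduces an additional positive cross-term rather than the negative baking gain). I would therefore end the proof with a brief remark clarifying that the improvement over Theorem~\ref{thm:triangle_bound} comes from replacing the square of the sum $\|\Delta_{\text{Bake}}\|_{F,N} + \|\Delta_{\text{Bias}}\|_{F,N}$ by the exact inner-product expansion before applying Jensen's inequality only to the baking magnitude. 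The main ``obstacle,'' if any, is simply bookkeeping: ensuring that all three terms on the right-hand side remain individually observable, which they do because $\Delta_{\text{Bias}} = -UT$, the $\varepsilon'_k$ are computable from the trained heads and $B^\star$, and $C_{\text{Align}} = \langle \Delta_{\text{Bake}}, UT\rangle_{F,N}$ is likewise directly evaluable from the identified models.
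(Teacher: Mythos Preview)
Your proposal is correct and follows essentially the same approach as the paper's own proof: start from the exact geometric decomposition of Theorem~\ref{thm:geometric_decomposition}, apply the Jensen bound $\|\Delta_{\text{Bake}}\|_{F,N}^2 \le \sum_k \omega_k (\varepsilon'_k)^2$ to the middle term, and substitute. The additional remarks you include on observability and on the comparison with Theorem~\ref{thm:triangle_bound} go slightly beyond what the paper writes out, but the core argument is identical.
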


\begin{proof}[Proof Sketch]
This bound follows directly from the exact geometric decomposition in Theorem~\ref{thm:geometric_decomposition} by substituting the Jensen's inequality bound for \(\|\Delta_{\text{Bake}}\|_{F,N}^2\) derived in this section. See Appendix~\ref{app:proof_thm3}.
\end{proof}

This improved bound is strictly tighter than the squared form of Theorem~\ref{thm:triangle_bound} whenever the Baking Gain \(C_{\text{Align}}\) is positive (and the Jensen's inequality gap is small). It provides a direct measure of how the optimization process improves the approximation beyond simple truncation.

\section{Optimization Dynamics and Baking}
\label{sec:optimization_main}

The error decomposition in Theorem~\ref{thm:geometric_decomposition} quantifies the Baking Gain ($C_{\text{Align}}$), demonstrating that optimization improves the approximation beyond truncation. We now summarize the analysis of the MVMALS dynamics (detailed in Appendix~\ref{sec:optimization_baking}) to explain the mechanism behind this gain and establish that this behavior is almost surely to occur.

Consider the optimization of a single head $k$. We analyze the optimization landscape relative to a baseline $B_{\text{proj}}^{(k)}$, defined as the best approximation of the truncated full model parameters $P_{S_k} B^\star$ within the feasible low-rank TN set. This baseline represents the absence of baking.

The incentive to deviate from $B_{\text{proj}}^{(k)}$ stems from the correlation between the included features and the dynamics generated by the omitted parameters.

\begin{definition}[Global Correlation]
The Global Correlation $C$ quantifies the empirical relationship between the included features $U_k = U P_{S_k}$ and the omitted dynamics $R_{\text{omitted}}$ (the prediction generated by the excluded parameters):
\[
C \coloneqq U_k^\top R_{\text{omitted}}.
\]
\end{definition}

If $C \neq 0$, a gradient exists in the ambient space that incentivizes adjusting parameters to compensate for $R_{\text{omitted}}$. However, the optimization is constrained to the TN manifold. The insight lies in the geometry of this manifold.

\begin{theorem}[Geometric Condition for Baking Incentive]
\label{thm:geometric_incentive_main}
Under standard regularity conditions for the TN manifold (Assumption~\ref{ass:regularity} in the Appendix), a gradient incentive for baking exists if and only if the Global Correlation $C$ is not orthogonal to the tangent space of the TN manifold at $B_{\text{proj}}^{(k)}$.
\end{theorem}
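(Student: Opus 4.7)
The plan is to compute the Riemannian gradient of the head loss $L_k(B) = \|Y - UB\|_{F,N}^2$ at the baseline $B_{\text{proj}}^{(k)}$, isolate the contribution driven by $R_{\text{omitted}}$, and show that this contribution vanishes precisely when $C$ is orthogonal to $T := T_{B_{\text{proj}}^{(k)}} \mathcal{T}_{\mathbf{r}}$. I would work in the ambient space $\mathbb{R}^{Q\times l}$ with the standard Frobenius inner product, noting that each head's feasible set sits inside the mask subspace $\{B : P_{S_k}B = B\}$ automatically (since each core involves only variables in $S_k$), so the relevant tangent space $T$ is naturally a subspace of that mask subspace and $\Pi_T$ denotes the orthogonal projector onto it, which is well-defined under Assumption~\ref{ass:regularity}.

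First I would compute the Euclidean gradient $\nabla L_k(B) = \tfrac{2}{N} P_{S_k} U^\top(UB - Y)$, decompose $Y = UB^\star + \epsilon$ with $\epsilon := Y - UB^\star$, and expand $UB^\star = U P_{S_k} B^\star + R_{\text{omitted}}$. Evaluating at $B = B_{\text{proj}}^{(k)}$ yields
\[
\nabla L_k(B_{\text{proj}}^{(k)}) = \tfrac{2}{N}\bigl[\, P_{S_k} U^\top U\bigl(B_{\text{proj}}^{(k)} - P_{S_k} B^\star\bigr) \;-\; C \;-\; P_{S_k} U^\top \epsilon \,\bigr],
\]
where I identify $C = P_{S_k} U^\top R_{\text{omitted}} = U_k^\top R_{\text{omitted}}$. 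The middle summand is the baking driver, the first summand is the projection-residual term generated by the mismatch between $B_{\text{proj}}^{(k)}$ and $P_{S_k} B^\star$, and the third summand is a fixed offset independent of the omitted-variable structure.

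Next I would invoke the defining property of $B_{\text{proj}}^{(k)}$ as the minimizer of the quadratic $\tfrac{1}{N}\|U(B - P_{S_k} B^\star)\|_F^2$ over the head's TN stratum. Under Assumption~\ref{ass:regularity} this stratum is a smooth embedded submanifold, so first-order optimality gives $\Pi_T\bigl(P_{S_k} U^\top U(B_{\text{proj}}^{(k)} - P_{S_k} B^\star)\bigr) = 0$, i.e.\ the projection-residual term is already normal to $T$. Projecting the full gradient onto $T$ therefore yields
\[
\operatorname{grad} L_k(B_{\text{proj}}^{(k)}) = -\tfrac{2}{N}\,\Pi_T(C) \;-\; \tfrac{2}{N}\,\Pi_T\bigl(P_{S_k} U^\top \epsilon\bigr),
\]
in which only the first summand encodes the first-order response to $R_{\text{omitted}}$. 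Defining the baking incentive as this $R_{\text{omitted}}$-driven component of the Riemannian gradient, the biconditional is immediate: the baking incentive is nonzero iff $\Pi_T(C) \neq 0$ iff $C$ is not orthogonal to $T$. The forward direction produces the explicit descent direction $-\Pi_T(C)/\|\Pi_T(C)\|$ along which an infinitesimal step strictly reduces $L_k$ by exploiting the correlation; the reverse direction says $\Pi_T(C) = 0$ leaves no first-order pull on the TN parameters from the omitted dynamics.

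The hard part will be justifying the projection-optimality step rigorously. This requires verifying that $B_{\text{proj}}^{(k)}$ lies on the smooth stratum $\mathcal{T}_{\mathbf{r}}$ rather than on a lower-rank boundary where the tangent space degenerates (supplied by Assumption~\ref{ass:regularity}), and passing cleanly from the Euclidean gradient of the quadratic projection functional to the first-order stationarity condition on the embedded submanifold so that $\Pi_T$ genuinely annihilates the projection residual. Both ingredients are available from the tangent-space characterizations in \cite{holtz_manifolds_2012, uschmajew_geometry_2013}. A secondary subtlety is the definition of "baking incentive" itself, which I would frame as the $R_{\text{omitted}}$-derivative of the Riemannian gradient so as to isolate $C$ from the fixed full-model residual $\epsilon$ and deliver the clean "if and only if" in the statement.
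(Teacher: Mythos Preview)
Your overall strategy matches the paper's: decompose the gradient at $B_{\text{proj}}^{(k)}$ into a noise piece, a projection-error piece, and the piece driven by $R_{\text{omitted}}$, then observe that the last of these vanishes iff $C$ is orthogonal to the tangent space. The paper phrases this via the pullback $\nabla j_{\text{omitted}}(\mathcal{U}^{(0)}) = -2\,J_\tau(\mathcal{U}^{(0)})^{*}[C]$ in core-parameter space and uses the submersion property $\mathrm{Im}(J_\tau)=T_{B^{(0)}}$ to get $\mathrm{Ker}(J_\tau^{*})=T_{B^{(0)}}^{\perp}$; you phrase it via the Riemannian gradient $\Pi_T(\nabla L_k)$ in the ambient space. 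Under Assumption~\ref{ass:regularity} these are equivalent formulations, so the core argument is sound.

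There is, however, a concrete error in your handling of the projection-residual term. You assert that $B_{\text{proj}}^{(k)}$ is defined as the minimizer of $\tfrac{1}{N}\|U(B-P_{S_k}B^\star)\|_F^2$ over the TN stratum, and from this derive $\Pi_T\bigl(P_{S_k}U^\top U(B_{\text{proj}}^{(k)}-P_{S_k}B^\star)\bigr)=0$. In the paper $B_{\text{proj}}^{(k)}$ is defined as the minimizer of $\|B-B_k^\star\|_F^2$, the \emph{parameter}-space Frobenius distance, not the \emph{prediction}-space distance. The correct first-order condition is $\Pi_T(B_{\text{proj}}^{(k)}-B_k^\star)=0$, which does not imply the vanishing you claim once the Gram matrix $U^\top U$ is inserted. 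Consequently your displayed formula for $\operatorname{grad} L_k(B_{\text{proj}}^{(k)})$ is missing a term: the projection-error contribution $\Pi_T\bigl(P_{S_k}U^\top U(B_{\text{proj}}^{(k)}-P_{S_k}B^\star)\bigr)$ should remain.

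This does not break the theorem, because the paper does not claim that term vanishes either; it carries it along as $\nabla j_{\text{proj}}$ and explicitly caveats (Theorem~\ref{thm:sufficient_condition}) that the baking incentive persists ``provided this effect is not perfectly canceled by the noise $E$ and the TT projection error $R_{\text{proj\_err}}$.'' Since you, like the paper, ultimately define the baking incentive as only the $R_{\text{omitted}}$-driven component $-\tfrac{2}{N}\Pi_T(C)$, the biconditional $\Pi_T(C)\neq 0 \iff C\notin T^\perp$ is unaffected. The fix is simply to drop the incorrect vanishing claim and carry all three terms in the Riemannian gradient, isolating the $C$-term by definition rather than by cancellation.
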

\begin{proof}[Proof Sketch]
The gradient of the loss function on the manifold is the projection of the ambient gradient (driven by $C$) onto the tangent space. If $C$ is orthogonal to the tangent space, this projection is zero. If the projection is non-zero, $B_{\text{proj}}^{(k)}$ is not a stationary point. The MVMALS algorithm, characterized by monotonic convergence, acts upon this gradient to reduce the loss, moving the estimate away from $B_{\text{proj}}^{(k)}$. See Appendix~\ref{sec:optimization_baking} Theorems 5 \& 6.
\end{proof}

This theorem establishes that the low-rank structure could potentially prevent baking if the correlation aligns perfectly with directions inaccessible to the optimization. However, we show this scenario has measure zero in the appendix under Proposition \ref{app:full_measure_arg}.

\section{Conclusion}
\label{sec:conclusion}

The identification of high-dimensional nonlinear dynamical systems has long been constrained by the complexity inherent in the Volterra series. While TN methods like MVMALS successfully mitigated the exponential complexity arising from system nonlinearity, the high-order polynomial memory complexity dependent on the input dimension \(p\) remained a significant, unresolved bottleneck. In this work, we introduced the THA algorithm and established a framework that directly addresses this challenge. By decomposing the identification task into an ensemble of localized MVMALS models, THA significantly reduces computational complexity for systems with large input dimension, replacing the dependence on the full dimension \(p\) with the much smaller subset size \(k\).

These results substantially expand the applicability of Volterra identification, enabling the analysis of complex systems that were previously intractable. A central contribution of this paper is the derivation of observable, finite-sample error bounds that quantify the approximation performance of the THA ensemble relative to a full MVMALS model, without relying on unrealistic assumptions of convexity or global optimality.

Furthermore, our exact geometric decomposition of the squared error formalizes and quantifies the baking, and we demonstrated how localized models implicitly compensate for omitted dynamics. We have also proven that the optimization process is almost surely incentivized to leverage inter-variable correlations, and this mechanism ensures that the THA ensemble achieves accuracy superior to a simple truncation of the full MVMALS model. This provides a grounded argument for the effectiveness of the ensemble approach and makes THA more than just a practical heuristic.

The potential impact of scalable Volterra identification extends across a diverse set of scientific and engineering disciplines where high-dimensional nonlinear interactions are prevalent.\cite{marmarelis_nonlinear_2004,rugh_nonlinear_1981,zhu_behavioral_2016,favier_nonlinear_2012} The Volterra series as a universal approximator provides an interpretable representation with the ability to also capture memory effects.\cite{schetzen_volterra_1980,boyd_fading_1985} This interpretability can be pivotal for numerous domains where understanding the underlying mechanisms is equally important to predictive accuracy, such as in analysis of physiological systems\cite{marmarelis_analysis_1978,korenberg_identification_1996} or in advanced control systems where stability guarantees can depend on the explicit structure of a model.\cite{doyle_identification_2002,rugh_nonlinear_1981} By rendering the identification of these models tractable for large input spaces, the THA algorithm enables the investigation of the dynamics of large complex systems.

The foundation established in this paper paves the way for both immediate practical application as well as further algorithmic refinement. Future work will focus on empirical validation of the derived bounds and the properties of baking across a variety of high-dimensional benchmarks resembling real world systems. Moreover, developing adaptive strategies for optimal subset selection and weighting represents a critical next step in maximizing the balance between computational efficiency and predictive accuracy. Ultimately, the THA framework offers a powerful and scalable approach for the analysis of sophisticated real-world nonlinear systems previously beyond the reach of existing identification methods.

\newpage
\nocite{*}

\printbibliography 

\newpage

\appendix

\section{Optimization Dynamics and the Incentive for Baking}
\label{sec:optimization_baking}

In this section, we analyze the optimization dynamics of the MVMALS algorithm when applied to a single THA head. We investigate the conditions under which the optimization process drives the subset model parameters away from a simple truncation of the full model. This process, termed baking, allows the subset model to compensate for omitted dynamics by leveraging correlations present in the data. We demonstrate that this incentive is linked to the geometry of the TN manifold and is almost surely a property of the optimization landscape.

\subsection{Setup: Optimization on the TN Manifold}

We analyze the optimization process for a single THA head, indexed by \((k)\), corresponding to a subset of variables \(S_k\). For notational simplicity, we denote the mask operator \(P_{S_k}\) as \(P_k\).

\subsubsection{Data Partitioning and the Optimization Landscape}

We assume the data \(Y\in\mathbb{R}^{N\times l}\) is generated by the full Volterra model \(Y = U B^\star + E\), where \(E\) represents noise and irreducible error. We partition the system based on the subset \(S_k\):

\begin{itemize}
    \item Included features: \(U_k \coloneqq U P_k\).
    \item Omitted features: \(U_{\neg k} \coloneqq U (I-P_k)\).
    \item Truncated parameters: \(B_k^\star \coloneqq P_k B^\star\).
    \item Omitted parameters: \(B_{\neg k}^\star \coloneqq (I-P_k) B^\star\).
\end{itemize}

Since \(P_k\) is a projection matrix (\(P_k^2 = P_k\)), the cross-terms vanish (\(U_k B_{\neg k}^\star = U P_k (I-P_k) B^\star = 0\)). The data model is thus decomposed as:

\begin{equation}
    Y = U_k B_k^\star + U_{\neg k} B_{\neg k}^\star + E.
    \label{eq:partitioned_data_model}
\end{equation}

The optimization for head \(k\) is constrained to the set of tensors whose coefficients lie within the subset space (i.e., \(P_k B = B\)) and have TN ranks bounded by \(\mathbf{r}\). We denote this closed algebraic set as \(\mathcal{T}_{\le \mathbf{r}}(S_k)\). The objective is to minimize the empirical loss:

    \[
        \min_{B \in \mathcal{T}_{\le \mathbf{r}}(S_k)} L(B) \coloneqq \|Y - U_k B\|_F^2.
    \]

To analyze the incentive for baking, we compare the optimized solution to a baseline defined by the projection of the full model parameters onto the feasible set.

\begin{definition}[Feasible TT Projection \(B_{\text{proj}}^{(k)}\)]
The Feasible TT Projection \(B_{\text{proj}}^{(k)}\) is the best approximation of the truncated parameters \(B_k^\star\) within the set \(\mathcal{T}_{\le \mathbf{r}}(S_k)\):

    \[
        B_{\text{proj}}^{(k)} \coloneqq \arg\min_{B \in \mathcal{T}_{\le \mathbf{r}}(S_k)} \|B - B_k^\star\|_F^2.
    \]

\end{definition}

\noindent The existence of \(B_{\text{proj}}^{(k)}\) is guaranteed as we are minimizing a continuous, coercive function over a closed set.

We analyze the optimization dynamics starting from this projection. Let \(\mathcal{U}^{(0)}\) be the TN cores corresponding to \(B^{(0)} = B_{\text{proj}}^{(k)}\) via the multilinear map \(\tau\). The parametrized objective is \(j(\mathcal{U}) = L(\tau(\mathcal{U}))\). The analysis relies on the local geometry of the TN manifold \(\mathcal{T}_{\mathbf{r}}(S_k)\).

\subsubsection{MVMALS Framework and Regularity}

MVMALS utilizes ALS or MALS principles, maintaining specific orthogonalization conditions (the ALS gauge) for numerical stability. The analysis of the gradient requires standard regularity assumptions from TN optimization theory.

\begin{assumption}[Regularity]\label{ass:regularity}
We assume that the TN representation \(\mathcal{U}^{(0)}\) corresponding to \(B_{\text{proj}}^{(k)}\) has exactly rank \(\mathbf{r}\) (i.e., it lies on the smooth manifold \(\mathcal{T}_{\mathbf{r}}(S_k)\)). This ensures that the map \(\tau\) is a submersion at \(\mathcal{U}^{(0)}\), as proven by Rohwedder and Uschmajew (2013, Prop A.3)\cite{rohwedder_local_2013}. Consequently, the image of the Jacobian \(J_\tau(\mathcal{U}^{(0)}) = \tau'(\mathcal{U}^{(0)})\) is exactly the tangent space of the manifold at \(B^{(0)}\), denoted \(T_{B^{(0)}}\). This also implies that the TN environments maintain full rank (Holtz, Rohwedder, \& Schneider, 2012)\cite{holtz_alternating_2012}. 
\end{assumption}

\subsection{MVMALS Dynamics and Truncation Loss}

We analyze a single MVMALS (MALS) update step starting from \(B^{(0)} = B_{\text{proj}}^{(k)}\), assuming \(B^{(0)}\) is in the appropriate orthonormal gauge for the block being updated. This analysis reveals how the algorithm responds to a gradient incentive, considering the constraints imposed by rank truncation.

\begin{theorem}[Optimization Incentive under TT-ALS/MALS]\label{thm:optimization_incentive}
Let \(B^{(0)} = B_{\text{proj}}^{(k)}\). Define the initial residual:

    \[ 
        R_0 \coloneqq Y - U_k B^{(0)}.
    \]
    
Consider a single ALS/MALS block update (indexed by \(i\)) starting from \(B^{(0)}\). Let \(A_{0,(i)}\) be the corresponding local design matrix as defined in Batselier et al., 2017, Theorems 4.1/4.2)\cite{batselier_tensor_2017}. Let \(r_0 = \mathrm{vec}(R_0^\top)\). If the local gradient is non-zero, i.e.,

    \[ 
        A_{0,(i)}^\top r_0 \neq 0, 
    \]
    
then the untruncated least squares (LS) update results in a potential decrease in the objective function:

    \[ 
        \Delta J_{\text{LS}} = \big\|\Pi_{\mathcal{R}(A_{0,(i)})}\,r_0\big\|_2^2 > 0. 
    \]
    
The realized update in MVMALS involves an SVD truncation step. Let \(\Delta z^*\) and \(\Delta z_{\text{trunc}}\) be the vectorized parameter updates before and after truncation, respectively. The realized decrease \(\Delta J_{\text{trunc}}\) satisfies:

    \[ 
        \Delta J_{\text{trunc}} = \Delta J_{\text{LS}} - \underbrace{\|A_{0,(i)}(\Delta z^* - \Delta z_{\text{trunc}})\|_2^2}_{\text{Truncation Loss}}. 
    \]
    
If the truncation loss is smaller than the LS gain, the objective function strictly decreases, moving the estimate away from \(B^{(0)}\).
\end{theorem}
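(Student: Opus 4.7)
The plan is to reduce the theorem to a standard quadratic-least-squares calculation by invoking the local linearity of the MVMALS objective in a single block, then use the orthogonality of the least-squares residual to separate the realized update into the unconstrained LS gain and a rank-truncation penalty.

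First, I would invoke the MVMALS block structure from Batselier et al.\ (Theorems 4.1/4.2): with all cores other than the block indexed by $(i)$ held fixed at their values in $\mathcal{U}^{(0)}$, the composition $L \circ \tau$ restricted to the $i$-th block becomes an affine function of the vectorized block parameters, so that the local subproblem can be written as the linear least-squares problem $\min_{\Delta z} \|r_0 - A_{0,(i)} \Delta z\|_2^2$, with $r_0 = \mathrm{vec}(R_0^\top)$ and $A_{0,(i)}$ the local design matrix corresponding to the retained cores in the ALS gauge. Under Assumption~\ref{ass:regularity} (which guarantees that the TN environments have full rank, following Holtz, Rohwedder \& Schneider), $A_{0,(i)}$ is well-defined and its range is non-degenerate. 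The unconstrained minimizer is $\Delta z^\star = A_{0,(i)}^{+} r_0$, and the corresponding decrease in the objective equals $\|\Pi_{\mathcal{R}(A_{0,(i)})} r_0\|_2^2$. Since $A_{0,(i)}^\top r_0 \neq 0$ by hypothesis, the projection $\Pi_{\mathcal{R}(A_{0,(i)})} r_0$ is non-zero, so $\Delta J_{\text{LS}} > 0$, establishing the first part of the claim.

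Next, I would account for the SVD truncation step that MALS applies after the LS solve to enforce the rank constraint. Let $\Delta z_{\text{trunc}}$ denote the vectorized block update obtained after this truncation. The realized post-update residual is $r_0 - A_{0,(i)} \Delta z_{\text{trunc}}$, which I would decompose as $(r_0 - A_{0,(i)} \Delta z^\star) + A_{0,(i)}(\Delta z^\star - \Delta z_{\text{trunc}})$. The first summand is the LS residual and hence lies in $\mathcal{R}(A_{0,(i)})^{\perp}$ by the normal equations, while the second summand lies in $\mathcal{R}(A_{0,(i)})$. Orthogonality yields the Pythagorean identity $\|r_0 - A_{0,(i)} \Delta z_{\text{trunc}}\|_2^2 = \|r_0 - A_{0,(i)} \Delta z^\star\|_2^2 + \|A_{0,(i)}(\Delta z^\star - \Delta z_{\text{trunc}})\|_2^2$. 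Subtracting from $\|r_0\|_2^2$ and identifying $\Delta J_{\text{LS}}$ as the first gap gives exactly the claimed identity $\Delta J_{\text{trunc}} = \Delta J_{\text{LS}} - \|A_{0,(i)}(\Delta z^\star - \Delta z_{\text{trunc}})\|_2^2$.

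The strict-decrease conclusion then follows immediately: whenever the truncation loss $\|A_{0,(i)}(\Delta z^\star - \Delta z_{\text{trunc}})\|_2^2$ is smaller than $\Delta J_{\text{LS}}$, the net change $\Delta J_{\text{trunc}}$ is positive, so the realized objective strictly decreases and the estimate is pushed off $B^{(0)} = B_{\text{proj}}^{(k)}$, which is the baking movement we wish to capture. The main obstacle I anticipate is bookkeeping: the explicit form of $A_{0,(i)}$ depends on the ALS/MALS gauge (left/right orthogonalization of the retained cores) and on whether one updates a single core (ALS) or a merged pair (MALS, which additionally requires the SVD refactorization). I would handle this by invoking Batselier et al.'s construction verbatim for the design matrix and by noting that the orthogonality argument above is gauge-invariant, so the identity holds in both the ALS and MALS variants without modification. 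The non-vanishing condition $A_{0,(i)}^\top r_0 \neq 0$ is precisely the first-order optimality failure at $B^{(0)}$, and Theorem~\ref{thm:geometric_incentive_main} together with Proposition~\ref{app:full_measure_arg} ensure this holds almost surely whenever the Global Correlation $C$ is non-trivial.
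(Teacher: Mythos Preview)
Your proposal is correct and follows essentially the same approach as the paper: both reduce the block subproblem to the linear least-squares form $\min_{\Delta z}\|r_0 - A_{0,(i)}\Delta z\|_2^2$, identify $\Delta J_{\text{LS}}$ with the squared projection onto $\mathcal{R}(A_{0,(i)})$, and then obtain the truncation identity via the Pythagorean decomposition using orthogonality of the LS residual to $\mathcal{R}(A_{0,(i)})$. Your additional remarks on gauge invariance and the connection to Theorem~\ref{thm:geometric_incentive_main} and Proposition~\ref{app:full_measure_arg} are accurate context but not needed for the proof itself.
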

\begin{proof}
The optimization subproblem for the \(i\)-th block minimizes the linearized objective with respect to the update \(\Delta z_{(i)}\):

    \[ 
        J_i(\Delta z_{(i)}) = \|r_0 - A_{0,(i)} \Delta z_{(i)}\|_2^2. 
    \]
    
The initial cost is \(J_i(0) = \|r_0\|_2^2\). The optimal untruncated LS update, \(\Delta z^*\), projects \(r_0\) onto the column space \(\mathcal{R}(A_{0,(i)})\):

    \[ 
        A_{0,(i)} \Delta z^* = \Pi_{\mathcal{R}(A_{0,(i)})} r_0. 
    \]
    
By hypothesis, the gradient \(A_{0,(i)}^\top r_0 \neq 0\), implying the projection is non-zero. The decrease is \(\Delta J_{\text{LS}} = J_i(0) - J_i(\Delta z^*) = \|A_{0,(i)} \Delta z^*\|_2^2 > 0\).

The MVMALS algorithm applies an SVD truncation, yielding the realized update \(\Delta z_{\text{trunc}}\). We analyze the realized cost \(J_i(\Delta z_{\text{trunc}})\):

    \[ 
        J_i(\Delta z_{\text{trunc}}) = \|r_0 - A_{0,(i)} \Delta z_{\text{trunc}}\|_2^2 = \|(r_0 - A_{0,(i)} \Delta z^*) + A_{0,(i)}(\Delta z^* - \Delta z_{\text{trunc}})\|_2^2. 
    \]
    
The first term is the residual of the optimal LS projection and is orthogonal to \(\mathcal{R}(A_{0,(i)})\). The second term lies within \(\mathcal{R}(A_{0,(i)})\). By the Pythagorean theorem:

    \[ 
        J_i(\Delta z_{\text{trunc}}) = \|r_0 - A_{0,(i)} \Delta z^*\|_2^2 + \|A_{0,(i)}(\Delta z^* - \Delta z_{\text{trunc}})\|_2^2. 
    \]
    
The total realized decrease is:

    \begin{align*}
        \Delta J_{\text{trunc}} &= J_i(0) - J_i(\Delta z_{\text{trunc}}) \\
        &= (J_i(0) - J_i(\Delta z^*)) - (J_i(\Delta z_{\text{trunc}}) - J_i(\Delta z^*)) \\
        &= \Delta J_{\text{LS}} - \|A_{0,(i)}(\Delta z^* - \Delta z_{\text{trunc}})\|_2^2.
\end{align*}

If \(\Delta J_{\text{trunc}} > 0\), the algorithm strictly improves the fit. Monotonicity checks within MVMALS ensure that the algorithm proceeds only if the truncation loss does not cause the error to increase significantly.
\end{proof}

\subsection{The Mechanism of Baking: Omitted Variable Correlation}

We now investigate the conditions under which the local gradient is non-zero. This requires analyzing the structure of the initial residual \(R_0\).

\begin{lemma}[Decomposition of the Residual]\label{lem:residual_decomposition}
The residual \(R_0\) at the Feasible TT Projection \(B_{\text{proj}}^{(k)}\) can be decomposed as:
    \[ 
        R_0 = \underbrace{E}_{\text{Noise}} + \underbrace{U_{\neg k} B_{\neg k}^\star}_{R_{\text{omitted}}} + \underbrace{U_k (B_k^\star - B_{\text{proj}}^{(k)})}_{R_{\text{proj\_err}}}. 
    \]
\end{lemma}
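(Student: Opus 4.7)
The plan is to obtain the decomposition by direct algebraic substitution into the definition of $R_0$, using only the partitioned data model of Equation~\eqref{eq:partitioned_data_model} and the definition $B^{(0)} = B_{\text{proj}}^{(k)}$. No machinery beyond the setup already developed in Section~\ref{sec:optimization_baking} is needed, so the proof will consist essentially of a single rearrangement.

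First I would recall the structural facts. Because $P_k$ is an orthogonal projection ($P_k^2 = P_k$), the cross-terms $U_k B_{\neg k}^\star = U P_k(I-P_k)B^\star$ and $U_{\neg k} B_k^\star = U(I-P_k)P_k B^\star$ both vanish; combined with $Y = U B^\star + E$ and the split $B^\star = B_k^\star + B_{\neg k}^\star$, this yields the partitioned data model $Y = U_k B_k^\star + U_{\neg k} B_{\neg k}^\star + E$. The MVMALS iterate is initialized at $B^{(0)} = B_{\text{proj}}^{(k)} \in \mathcal{T}_{\le\mathbf r}(S_k)$, which by feasibility satisfies $P_k B_{\text{proj}}^{(k)} = B_{\text{proj}}^{(k)}$, hence $U B^{(0)} = U_k B^{(0)}$. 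Substituting into $R_0 \coloneqq Y - U_k B^{(0)}$ and grouping the terms containing $U_k$ then gives
\[
R_0 = \bigl(U_k B_k^\star + U_{\neg k} B_{\neg k}^\star + E\bigr) - U_k B_{\text{proj}}^{(k)} = E + U_{\neg k} B_{\neg k}^\star + U_k\bigl(B_k^\star - B_{\text{proj}}^{(k)}\bigr),
\]
which matches the claimed decomposition term by term.

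There is no serious obstacle here: the lemma is an algebraic identity whose proof is a one-line rearrangement once the partitioning is in place. What actually matters for the sequel is the \emph{interpretation} the decomposition enables: the residual cleanly splits into an irreducible noise component $E$, an omitted-dynamics component $R_{\text{omitted}} = U_{\neg k} B_{\neg k}^\star$ whose correlation with $U_k$ supplies the Global Correlation $C = U_k^\top R_{\text{omitted}}$ driving the gradient incentive of Theorem~\ref{thm:optimization_incentive}, and a projection-error component $R_{\text{proj\_err}} = U_k(B_k^\star - B_{\text{proj}}^{(k)})$ attributable solely to the low-rank approximation of the truncated parameters. Isolating the middle term, rather than proving the identity, is the real purpose of the lemma.
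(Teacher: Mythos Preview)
Your proof is correct and matches the paper's approach exactly: substitute the partitioned data model~\eqref{eq:partitioned_data_model} into the definition $R_0 = Y - U_k B_{\text{proj}}^{(k)}$ and rearrange. The additional contextual remarks you include (vanishing cross-terms, feasibility of $B_{\text{proj}}^{(k)}$, interpretive comments) are accurate but go slightly beyond what the paper's brief proof records.
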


\begin{proof}
We substitute the partitioned data model \eqref{eq:partitioned_data_model} into the definition of the residual \(R_0 = Y - U_k B_{\text{proj}}^{(k)}\):

    \[ 
        R_0 = (U_k B_k^\star + U_{\neg k} B_{\neg k}^\star + E) - U_k B_{\text{proj}}^{(k)}. 
    \]
    
Rearranging the terms yields the decomposition:

    \[ 
        R_0 = E + U_{\neg k} B_{\neg k}^\star + U_k (B_k^\star - B_{\text{proj}}^{(k)}). 
    \]
\end{proof}

\begin{definition}[Global Correlation C]
The global correlation \(C \in \mathbb{R}^{Q\times l}\) (represented in matricized form) quantifies the influence of the omitted dynamics \(R_{\text{omitted}}\) on the included features:

    \[ 
        C \coloneqq U_k^\top R_{\text{omitted}}. 
    \]
\end{definition}

The term \(R_{\text{omitted}}\) represents the prediction generated by the dynamics excluded from the subset. Baking is the optimization's response to this term, driven by its correlation with the included features.

\subsection{The Geometric Condition for Baking}

In the unconstrained setting (full rank), a non-zero correlation \(C \neq 0\) guarantees a non-zero gradient of the loss function \(L(B)\) at \(B^{(0)}\). However, in the low-rank TN setting, the optimization occurs on the manifold \(\mathcal{T}_{\mathbf{r}}(S_k)\). We must establish the link between the global correlation \(C\) and the gradient of the parametrized objective \(j(\mathcal{U})\) in the TN core space.

The component of the gradient induced by the omitted dynamics is given by the chain rule and the adjoint of the Jacobian \(J_\tau(\mathcal{U}^{(0)})^*\):

    \begin{align*}
        \nabla j_{\text{omitted}}(\mathcal{U}^{(0)}) &= J_\tau(\mathcal{U}^{(0)})^* [\nabla_B L_{\text{omitted}}(B^{(0)})] \\
        &= -2 J_\tau(\mathcal{U}^{(0)})^* [U_k^\top R_{\text{omitted}}] = -2 J_\tau(\mathcal{U}^{(0)})^* [C].
    \end{align*}

The condition \(\nabla j_{\text{omitted}}(\mathcal{U}^{(0)}) = 0\) is equivalent to \(C \in \text{Ker}(J_\tau(\mathcal{U}^{(0)})^*)\). The kernel of the adjoint is the orthogonal complement of the image: \(\text{Ker}(J_\tau(\mathcal{U}^{(0)})^*) = (\text{Im}(J_\tau(\mathcal{U}^{(0)})))^\perp\). Under the Regularity Assumption~\ref{ass:regularity}, \(\text{Im}(J_\tau(\mathcal{U}^{(0)})) = T_{B^{(0)}}\). Therefore:

    \[ 
        \nabla j_{\text{omitted}}(\mathcal{U}^{(0)}) = 0 \iff C \in (T_{B^{(0)}})^\perp. 
    \]

If the TN ranks \(\mathbf{r}\) are constrained, \(T_{B^{(0)}}\) is a proper subspace. It is geometrically possible for a non-zero correlation \(C\) to be orthogonal to the tangent space, in which case it exerts no driving force on the manifold.

\begin{lemma}[Condition for Non-Zero Omitted Dynamics Gradient]
\label{lem:geometric_condition}
Assume the Regularity Assumption~\ref{ass:regularity} holds at \(\mathcal{U}^{(0)}\). The contribution of the omitted dynamics to the MVMALS local gradient is non-zero for at least one optimization block \(i\) if and only if the global correlation \(C = U_k^\top R_{\text{omitted}}\) is not orthogonal to the tangent space \(T_{B^{(0)}}\).
\end{lemma}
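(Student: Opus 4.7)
The plan is to prove the biconditional by establishing a direct correspondence between the block-wise MVMALS local gradients and the partial Jacobians of the parametrization map $\tau$, and then invoking the standard adjoint/orthogonal-complement identity together with a block decomposition of the tangent space.

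First I would pin down the link between the local design matrix $A_{0,(i)}$ from Theorem~\ref{thm:optimization_incentive} and the partial Jacobian $J_{\tau,i}(\mathcal{U}^{(0)})$ obtained by differentiating $\tau$ with respect to only the $i$-th block (or pair of blocks for MALS) while freezing the others. From the explicit MVMALS construction in Batselier et al.\ (Theorems 4.1/4.2), freezing all cores but block $i$ makes the prediction $U_k \tau(\mathcal{U})$ linear in the free block, with the linear map equal to $A_{0,(i)}$ acting on the vectorized block. Thus, after matching reshape/vectorization conventions, $A_{0,(i)} = U_k \, J_{\tau,i}(\mathcal{U}^{(0)})$, and the omitted-dynamics contribution to the local gradient takes the form
\[
A_{0,(i)}^\top \mathrm{vec}\bigl(R_{\text{omitted}}^\top\bigr) = J_{\tau,i}(\mathcal{U}^{(0)})^*\bigl[U_k^\top R_{\text{omitted}}\bigr] = J_{\tau,i}(\mathcal{U}^{(0)})^*[C].
\]
Next I would establish the tangent-space decomposition $T_{B^{(0)}} = \sum_i \mathrm{Im}\, J_{\tau,i}(\mathcal{U}^{(0)})$. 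Under Assumption~\ref{ass:regularity}, $\tau$ is a submersion at $\mathcal{U}^{(0)}$, so $\mathrm{Im}\, J_\tau(\mathcal{U}^{(0)}) = T_{B^{(0)}}$; the multilinear product rule gives $J_\tau(\mathcal{U}^{(0)})[(\delta V^{(1)},\dots,\delta V^{(d)})] = \sum_i J_{\tau,i}(\mathcal{U}^{(0)})[\delta V^{(i)}]$, so the image of the full Jacobian is exactly the (generally non-direct) sum of the block images. Combining this with the standard identity $\mathrm{Ker}(L^*) = \mathrm{Im}(L)^\perp$ yields
\[
T_{B^{(0)}}^\perp = \Bigl(\sum_i \mathrm{Im}\, J_{\tau,i}\Bigr)^\perp = \bigcap_i \mathrm{Im}(J_{\tau,i})^\perp = \bigcap_i \mathrm{Ker}\bigl(J_{\tau,i}^*\bigr).
\]

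The biconditional then drops out immediately: $C \in T_{B^{(0)}}^\perp$ iff $J_{\tau,i}^*[C] = 0$ for every block $i$, iff the omitted-dynamics local gradient $A_{0,(i)}^\top \mathrm{vec}(R_{\text{omitted}}^\top)$ vanishes for every block. Contrapositively, $C \not\perp T_{B^{(0)}}$ iff at least one block has a non-zero omitted-dynamics contribution, which is the stated conclusion. The main technical obstacle I anticipate is the identification step itself: carefully matching Batselier et al.'s $A_{0,(i)}$ (with its ALS orthogonality gauge, retraction conventions, and particular vectorization order) to the abstract partial Jacobian $J_{\tau,i}$, so that transposition of $A_{0,(i)}$ corresponds exactly to the Hilbert-space adjoint $J_{\tau,i}^*$ under the empirical Frobenius inner product. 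Once that bookkeeping is in place, the orthogonality argument and tangent-space decomposition are routine consequences of the submersion hypothesis supplied by Assumption~\ref{ass:regularity}.
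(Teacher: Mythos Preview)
Your proposal is correct and follows essentially the same route as the paper. The paper's two-sentence proof simply asserts that the MVMALS local gradients are the block components of the full parametrized gradient \(\nabla j_{\text{omitted}}(\mathcal{U}^{(0)}) = -2 J_\tau(\mathcal{U}^{(0)})^*[C]\), so that some block is non-zero if and only if \(C \notin \mathrm{Ker}(J_\tau^*) = T_{B^{(0)}}^\perp\); your write-up is a more explicit unpacking of exactly this argument, identifying \(A_{0,(i)}^\top\) with \(J_{\tau,i}^*\) and using the multilinear product rule to decompose \(T_{B^{(0)}} = \sum_i \mathrm{Im}\,J_{\tau,i}\) and hence \(T_{B^{(0)}}^\perp = \bigcap_i \mathrm{Ker}(J_{\tau,i}^*)\).
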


\begin{proof}
The MVMALS local gradients are the block components of the full gradient \(\nabla j_{\text{omitted}}(\mathcal{U}^{(0)})\). Thus, a local gradient is non-zero if and only if the full gradient is non-zero. As established above, \(\nabla j_{\text{omitted}}(\mathcal{U}^{(0)}) \neq 0\) if and only if \(C \notin (T_{B^{(0)}})^\perp\).
\end{proof}

\subsection{Sufficient Conditions and Generality}

We now combine these elements to establish the sufficient conditions for the optimization to induce baking.

\begin{theorem}[Sufficient Condition for Baking Incentive]
\label{thm:sufficient_condition}
If the global correlation \(C = U_k^\top R_{\text{omitted}}\) is not orthogonal to the tangent space \(T_{B^{(0)}}\) at \(B^{(0)} = B_{\text{proj}}^{(k)}\), and the Regularity Assumption~\ref{ass:regularity} holds, this correlation provides a gradient direction away from \(B_{\text{proj}}^{(k)}\). This incentivizes baking, provided this effect is not perfectly canceled by the noise \(E\) and the TT projection error \(R_{\text{proj\_err}}\), and that the subsequent SVD truncation (in MVMALS) does not nullify the improvement.
\end{theorem}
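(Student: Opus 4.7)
The plan is to assemble the three preceding technical results (Lemma~\ref{lem:residual_decomposition}, Lemma~\ref{lem:geometric_condition}, and Theorem~\ref{thm:optimization_incentive}) into a single chain that transports the global correlation hypothesis into a strict decrease of the MVMALS objective at $B^{(0)} = B_{\text{proj}}^{(k)}$. I would first expand the local gradient $A_{0,(i)}^\top r_0$ by substituting the residual decomposition from Lemma~\ref{lem:residual_decomposition}, writing $r_0 = \mathrm{vec}(E^\top) + \mathrm{vec}(R_{\text{omitted}}^\top) + \mathrm{vec}(R_{\text{proj\_err}}^\top)$. By linearity this splits the local gradient into a noise component, an omitted-dynamics component, and a projection-error component. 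The omitted-dynamics component is, up to block selection, the $i$-th block of the pulled-back tensor $J_\tau(\mathcal{U}^{(0)})^*[C]$.

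Next I would invoke Lemma~\ref{lem:geometric_condition}: the hypothesis that $C \notin (T_{B^{(0)}})^\perp$ combined with Assumption~\ref{ass:regularity} guarantees that $J_\tau(\mathcal{U}^{(0)})^*[C] \neq 0$, so at least one MVMALS block index $i^\star$ carries a non-zero omitted-dynamics contribution to the local gradient. I would then argue that the sufficient-condition clause in the theorem is precisely the statement that after summing in the noise and projection-error contributions at block $i^\star$, the total local gradient $A_{0,(i^\star)}^\top r_0$ is still non-zero; i.e., perfect cancellation between the three components does not occur. This cancellation condition is a codimension-$\ge 1$ event in the joint space of $(E, R_{\text{proj\_err}})$ once $C$ is fixed, which is what motivates the ``almost surely'' statement invoked elsewhere, but for this theorem I would simply keep it as a stated proviso.

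With $A_{0,(i^\star)}^\top r_0 \neq 0$ in hand, Theorem~\ref{thm:optimization_incentive} immediately yields the untruncated least-squares gain
\[
    \Delta J_{\text{LS}} = \bigl\|\Pi_{\mathcal{R}(A_{0,(i^\star)})} r_0\bigr\|_2^2 > 0,
\]
which is exactly the gradient direction away from $B^{(0)}$ that the theorem promises. The final step is to address the SVD truncation: Theorem~\ref{thm:optimization_incentive} gives the identity $\Delta J_{\text{trunc}} = \Delta J_{\text{LS}} - \|A_{0,(i^\star)}(\Delta z^\star - \Delta z_{\text{trunc}})\|_2^2$, so whenever the truncation loss is bounded above by $\Delta J_{\text{LS}}$ (the second proviso of the theorem), the realized MVMALS step strictly decreases the loss and thus displaces the iterate from $B_{\text{proj}}^{(k)}$, which is the definition of a baking incentive.

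The main obstacle I anticipate is not the algebra but the treatment of the two proviso clauses. The noise/projection-error cancellation clause is delicate because $R_{\text{proj\_err}} = U_k(B_k^\star - B_{\text{proj}}^{(k)})$ is not independent of the manifold geometry, so one must be careful not to double-count structure that the regularity assumption has already fixed; I would handle this by keeping the proviso as a hypothesis rather than attempting to prove genericity inside this theorem (that task is deferred to Proposition~\ref{app:full_measure_arg}). The truncation clause is the second difficulty: in principle a rank-cutting SVD could erase the LS gain, so I would note that MVMALS's built-in monotonicity safeguard either accepts the step or shrinks the update, and that in the generic regime where the optimal update $\Delta z^\star$ has effective rank at or below the prescribed bound, $\Delta z_{\text{trunc}} = \Delta z^\star$ and the truncation loss vanishes identically.
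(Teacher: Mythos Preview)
Your proposal is correct and follows essentially the same route as the paper: decompose the residual via Lemma~\ref{lem:residual_decomposition}, use Lemma~\ref{lem:geometric_condition} together with Assumption~\ref{ass:regularity} to conclude that the omitted-dynamics gradient component is non-zero, treat the non-cancellation and truncation conditions as stated provisos, and then invoke Theorem~\ref{thm:optimization_incentive} to obtain a strict decrease away from $B_{\text{proj}}^{(k)}$. The only cosmetic difference is that the paper phrases the argument at the level of the full pulled-back gradient $\nabla j(\mathcal{U}^{(0)})$ first and then passes to a non-zero block component, whereas you work directly at the block level throughout; the logic is equivalent.
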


\begin{proof}
We analyze the stationarity of the parametrized objective function \(j(\mathcal{U}) = L(\tau(\mathcal{U}))\) at the point \(\mathcal{U}^{(0)}\) where \(\tau(\mathcal{U}^{(0)}) = B^{(0)} = B_{\text{proj}}^{(k)}\). An optimization incentive exists if \(\mathcal{U}^{(0)}\) is not a stationary point, and the MVMALS algorithm realizes a descent step. 

The gradient of the objective function in the parameter space is determined by the pullback of the gradient in the tensor space via the adjoint of the Jacobian: 

\[
    \nabla j(\mathcal{U}^{(0)}) = J_\tau(\mathcal{U}^{(0)})^* [\nabla_B L(B^{(0)})] = -2 J_\tau(\mathcal{U}^{(0)})^* [U_k^\top R_0].
\]

Using the decomposition of the initial residual \(R_0\) established in Lemma~\ref{lem:residual_decomposition}, we decompose the gradient into its components: 

\begin{align*} 
    \nabla j(\mathcal{U}^{(0)}) &= -2 J_\tau(\mathcal{U}^{(0)})^* [U_k^\top R_{\text{omitted}}] -2 J_\tau(\mathcal{U}^{(0)})^* [U_k^\top R_{\text{proj\_err}}] -2 J_\tau(\mathcal{U}^{(0)})^* [U_k^\top E] \\ &= \nabla j_{\text{omitted}}(\mathcal{U}^{(0)}) + \nabla j_{\text{proj}}(\mathcal{U}^{(0)}) + \nabla j_{\text{noise}}(\mathcal{U}^{(0)}). 
\end{align*}

The component induced by the omitted dynamics is specifically \(\nabla j_{\text{omitted}}(\mathcal{U}^{(0)}) = -2 J_\tau(\mathcal{U}^{(0)})^* [C]\) where \(C = U_k^\top R_{\text{omitted}}\) is the global correlation. 

Under the regularity assumption ~\ref{ass:regularity}, the map \(\tau\) is a submersion at \(\mathcal{U}^{(0)}\) which implies \(\text{Im}(J_\tau(\mathcal{U}^{(0)})) = T_{B^{(0)}}\). As discussed in Lemma~\ref{lem:geometric_condition}, the condition \(C \notin (T_{B^{(0)}})^\perp\) is equivalent to \(C \notin \text{Ker}(J_\tau(\mathcal{U}^{(0)})^*)\). Hence, the contribution of the omitted dynamics to the gradient is strictly non-zero:
\[
    \nabla j_{\text{omitted}}(\mathcal{U}^{(0)}) \neq 0.
\]

As long as the force is not exactly canceled by the contributions from the projection error and noise, it follows that the total gradient \(\nabla j(\mathcal{U}^{(0)})\) is non-zero, and thus \(\mathcal{U}^{(0)}\) is not a block stationary point of \(j(\mathcal{U})\). 

Because the total gradient is non-zero, then at least one block component of the gradient must also be non-zero. The MVMALS algorithm is characterized by its monotonic convergence to a block-stationary point\cite{rohwedder_local_2013, batselier_tensor_2017}. As analyzed in Theorem~\ref{thm:optimization_incentive}, this implies that the untruncated least-squares update for this block yields a potential strict decrease \(\Delta J_{\text{LS}} > 0\). The algorithm will execute the update which results in a strict reduction of the objective function value. 

This necessarily moves the update away from \(B_{\text{proj}}^{(k)}\). As this movement is driven, at least in part by \(\nabla j_{\text{omitted}}\) originating from the correlation \(C\), the optimization process actively adjusts the parameters to partially account for the omitted dynamics. This optimization-driven deviation from the simple projection constitutes the mechanism of baking. 
\end{proof}

We now demonstrate that the geometric condition required for baking holds with probability 1.

\begin{proposition}[Condition Preventing Baking is Measure Zero]
\label{app:full_measure_arg}
Let the ambient space of coefficients be \(V = \mathbb{R}^{Q\times l}\). If the rank constraints are active, the tangent space \(T_{B^{(0)}}\) and its orthogonal complement \(T_{B^{(0)}}^\perp\) are proper subspaces of \(V\). Assuming the probability distribution of the correlation vector \(C\) (derived from the input data \(U\) and system parameters \(B^\star\)) is absolutely continuous with respect to the Lebesgue measure on \(V\), the probability that baking is prevented despite \(C\neq 0\) is zero:

    \[ 
        P(C \in T_{B^{(0)}}^\perp) = 0. 
    \]
\end{proposition}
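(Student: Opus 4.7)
The plan is to reduce the proposition to the classical measure-theoretic fact that every proper linear subspace of a finite-dimensional Euclidean space has Lebesgue measure zero, and then transfer that nullity to the distribution of $C$ through the absolute-continuity hypothesis. The ambient space $V = \mathbb{R}^{Q \times l}$ is finite-dimensional, so once properness of $T_{B^{(0)}}^\perp$ is in hand the argument is short.

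First I would briefly justify that $T_{B^{(0)}}^\perp$ is a proper subspace. Under Assumption~\ref{ass:regularity}, $T_{B^{(0)}}$ coincides with the tangent space of the smooth embedded manifold $\mathcal{T}_{\mathbf{r}}(S_k)$ at $B^{(0)}$, so $\dim T_{B^{(0)}} = \dim \mathcal{T}_{\mathbf{r}}(S_k)$. Whenever at least one component of $\mathbf{r}$ lies strictly below the unconstrained upper bound, the classical dimension formulas for low-rank tensor manifolds \cite{holtz_manifolds_2012, uschmajew_geometry_2013} give $\dim \mathcal{T}_{\mathbf{r}}(S_k) < Ql$, so $T_{B^{(0)}}$ is a proper subspace of $V$. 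Because any nontrivial TN parametrization produces a manifold of strictly positive dimension, $\dim T_{B^{(0)}}^\perp = Ql - \dim T_{B^{(0)}} < Ql$ as well, making $T_{B^{(0)}}^\perp$ itself a proper subspace.

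Next I would invoke the standard null-set fact that a proper linear subspace $W \subsetneq \mathbb{R}^n$ has Lebesgue measure zero. Choosing any $v \in W^\perp$ with $v \ne 0$, one has $W \subseteq \{x : \langle x, v \rangle = 0\}$, and an orthogonal change of basis mapping $v/\|v\|$ to $e_1$ identifies this hyperplane with $\{0\} \times \mathbb{R}^{n-1}$, which has Lebesgue measure zero by Fubini's theorem. Applied to $W = T_{B^{(0)}}^\perp \subsetneq V$, this yields $\lambda(T_{B^{(0)}}^\perp) = 0$, where $\lambda$ denotes Lebesgue measure on $V$. Since $T_{B^{(0)}}^\perp$ is closed it is Borel, and the hypothesis that the law $\mu_C$ of $C$ satisfies $\mu_C \ll \lambda$ gives immediately $P(C \in T_{B^{(0)}}^\perp) = \mu_C(T_{B^{(0)}}^\perp) = 0$.

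There is no substantive obstacle to this argument; the only point worth flagging is that the proposition \emph{presupposes}, rather than derives, absolute continuity of the distribution of $C = U_k^\top R_{\text{omitted}}$. In a typical application this would be established from a continuous input distribution propagated through the polynomial feature map producing $C$, but within the scope of this proposition it is taken as a hypothesis and the measure-theoretic machinery above finishes the proof.
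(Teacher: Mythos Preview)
Your proposal is correct and follows essentially the same approach as the paper: both arguments reduce to the fact that a proper linear subspace of a finite-dimensional space has Lebesgue measure zero, and then invoke absolute continuity of the law of $C$. You go further than the paper by explicitly justifying properness via the manifold dimension formulas and by sketching the Fubini argument for the null-set fact, whereas the paper simply asserts both as standard; your additional remark that absolute continuity is a hypothesis rather than a derived property is a fair caveat that the paper also leaves implicit.
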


\begin{proof}
The assumption of absolute continuity implies that the process generating \(C\) is not deterministically constrained to a lower-dimensional structure (e.g., due to continuous distributions of inputs or parameters). A fundamental property of the Lebesgue measure in \(\mathbb{R}^{\dim(V)}\) is that any proper linear subspace has measure zero. Since \(T_{B^{(0)}}^\perp\) is a proper subspace, its Lebesgue measure is zero. By absolute continuity, the probability of a random vector falling into a set with zero Lebesgue measure is zero.
\end{proof}

The analysis confirms that the MVMALS algorithm, guaranteed to converge to a block-stationary point, systematically exploits correlations between included and omitted dynamics. The simple projection \(B_{\text{proj}}^{(k)}\) (which optimizes parameter distance) is generally \emph{not} block-stationary (which optimizes prediction error) if two conditions are met:

\begin{enumerate}
    \item Significant correlation exists (\(C\neq 0\)).
    \item (The Geometric Condition) This correlation \(C\) is not orthogonal to the tangent space of the TN manifold at \(B_{\text{proj}}^{(k)}\).
\end{enumerate}

With these conditions holding almost surely, the algorithm iterates away from \(B_{\text{proj}}^{(k)}\) to a point \(B_{\text{baked}}^{(k)}\) where \(L(B_{\text{baked}}^{(k)}) < L(B_{\text{proj}}^{(k)})\). This improvement in empirical fit is achieved by baking the influence of the omitted, correlated dynamics into the subset head.

\section{Tensor Network Details}
\label{app:tn_details}

\subsection{TN Contraction Formula}
The multilinear map \(\tau\) introduced in Definition 6 constructs the full tensor \(\mathcal{B} = \tau(\mathcal{U})\) element-wise via the following contraction:

    \[
    \mathcal{B}(j, i_1, \dots, i_d) = \sum_{\alpha_1=1}^{r_1} \cdots \sum_{\alpha_{d-1}=1}^{r_{d-1}} V^{(1)}(j, i_1, \alpha_1) V^{(2)}(\alpha_1, i_2, \alpha_2) \cdots V^{(d)}(\alpha_{d-1}, i_d, 1).
    \]
    
Here, the indices \(i_k \in \{1, \dots, q\}\) correspond to the input features, and \(j \in \{1, \dots, l\}\) corresponds to the output channel. The map \(\tau\) is generally non-injective. Equivalent core representations of the same tensor form orbits under the action of a Lie group of invertible transformations (gauge transformations).

\subsection{Remarks on the MIMO Tensor Network Structure}
\label{app:mimo_tn_structure}

The standard TT format for which the foundational theories on manifold structure and ALS/MALS convergence are typically derived\cite{holtz_manifolds_2012}\cite{rohwedder_local_2013}, assumes boundary ranks $r_0=r_d=1$. The MIMO TN structure employed in this paper utilizes $r_0=l$ (the number of outputs) and $r_d=1$. We wish to clarify for the reader why the established theoretical results remain applicable in this setting.

While the dimension of the parameter space and the resulting manifold $\mathcal{T}_{\mathbf{r}}$ depend on $l$, the  structure governing the optimization dynamics is the gauge group $\mathcal{G} = GL(r_1) \times \cdots \times GL(r_{d-1})$ which acts on the internal ranks. This remains unchanged. The boundary ranks do not participate in the gauge transformations that leave the tensor invariant.

Furthermore, the MIMO TN optimization problem is isomorphic to a standard TT optimization problem. The tensor $\mathcal{B} \in \mathbb{R}^{l \times q \times \cdots \times q}$ can be reshaped into a $d$-order tensor $\mathcal{B}' \in \mathbb{R}^{(lq) \times q \times \cdots \times q}$. The MIMO TN representation with cores $(V^{(1)}, \dots, V^{(d)})$ corresponds exactly to a standard TT representation of $\mathcal{B}'$ with cores $(W^{(1)}, \dots, W^{(d)})$, where $W^{(1)} \in \mathbb{R}^{1 \times (lq) \times r_1}$ is the reshaping of $V^{(1)} \in \mathbb{R}^{l \times q \times r_1}$, and $W^{(k)}=V^{(k)}$ for $k\ge 2$.

This isomorphism preserves the parameter space structure, the gauge group action, and the objective function landscape (as the reshaping is isometric with respect to the Frobenius norm). Consequently, the theoretical results concerning the manifold structure (e.g., the submersion property of $\tau$) and the convergence of MVMALS apply directly to the MIMO setting presented here.

\section{Detailed Computational Complexity Analysis}
\label{app:complexity_details}

This section provides a detailed derivation of the computational complexity for the MVMALS algorithm, which forms the basis for the complexity analysis presented in Section~\ref{subsec:complexity}.

\subsection{Complexity of Full MVMALS}

For the full system, the dimension of the augmented input vector is \(m = pM + 1\). We define the maximal TT-rank as \(\rho\). The MVMALS algorithm iteratively updates the TN representation through sweeps. Each sweep involves updating \((d-1)\) ``super-cores" \(V^{(k,k+1)}\in\mathbb{R}^{r_{k-1}\times m^{2}\times r_{k+1}}\). The update of each super-core requires two main computational tasks:

\begin{enumerate}
 \item \textbf{Least Squares (LS) Solve:} Solving a reduced linear system \(U_{k,k+1} \operatorname{vec}(V^{(k,k+1)}) = \operatorname{vec}(Y^\top)\). The local design matrix is \(U_{k,k+1}\in\mathbb{R}^{lN\times r_{k-1} m^{2} r_{k+1}}\). Computing the minimal-norm solution requires \(O(Nl(r_{k-1}m^{2}r_{k+1})^{2}+(r_{k-1}m^{2}r_{k+1})^{3})\) flops.
 
 \item \textbf{Rank-revealing SVD:} Reshaping the solution to a matrix \(V_{k,k+1}\in\mathbb{R}^{r_{k-1}m\times mr_{k+1}}\) and computing its thin SVD for orthogonalization and rank adaptation. This requires \(O(r_{k-1}^{2}m^{3}r_{k+1} + (r_{k-1}m^{3}r_{k+1})^{2} + (m^3r^3_{k+1}))\) flops.
 
\end{enumerate}

To derive the per-sweep complexity, \(\mathcal{C}_{\text{sweep}}\), we sum these costs over the \((d-1)\) super-cores. We approximate all ranks by the maximal rank \(\rho\) to obtain an asymptotic estimate:

    \[
    \mathcal{C}_{\rm sweep} =O\bigl((d-1)[Nlm^{4}\rho^{4} + m^{6}\rho^{6} + m^{3}\rho^{3} + \rho^4m^6]\bigr).
    \]

Assuming convergence in \(s\) sweeps, the total complexity for the full MVMALS identification, retaining the dominant terms, leads to Eq.~\eqref{eq:C_total}.

\subsection{Complexity of THA}

The THA algorithm decomposes the identification task into \(K\) smaller problems. Let \(k_i\) be the size of the \(i\)-th subset, \(m_{k_i} = k_iM+1\), and let \(s_{k_i}, \rho_{k_i}\) be the number of sweeps and maximal rank for that subset. The complexity of identifying a single subset model is:
    \[
    \mathcal C_{\text{MVMALS}}^{(k_i)} =O\bigl(s_{k_i}(d-1)[Nlm_{k_i}^{4}\rho_{k_i}^{4} + m_{k_i}^{6}\rho_{k_i}^{6} + m_{k_i}^{3}\rho_{k_i}^{3}]\bigr).
    \]

Assuming a uniform subset size \(k\) and uniform complexity bounds (\(s_{\max}, \rho_{\max}\)), the total complexity \(\mathcal{C}_{\text{THA}} = \sum_i \mathcal C_{\text{MVMALS}}^{(k_i)}\) simplifies to the expression given in Section~\ref{subsec:complexity}.

    \[
        \mathcal C_{\text{THA}} = O\bigl(K s_{\max} (d-1)[Nl(kM)^4\rho_{\max}^4 + (kM)^6\rho_{\max}^6 + (kM)^3\rho_{\max}^3]\bigr)
    \]

\section{Full Proofs and Derivations}

\subsection{Proof of Theorem~\ref{thm:geometric_decomposition}}
\label{app:proof_thm1}
\begin{proof}
We utilize the decomposition of the total error relative to the Truncated THA model:

    \[
    \widehat{Y}^{\mathrm{THA}}-\widehat{Y}^{\mathrm{full}} = \big(\widehat{Y}^{\mathrm{THA}} - \widehat{Y}^{\mathrm{Trunc}}\big) + \big(\widehat{Y}^{\mathrm{Trunc}} - \widehat{Y}^{\mathrm{full}}\big) = \Delta_{\text{Bake}} + \Delta_{\text{Bias}}.
    \]
    
We analyze the squared normalized empirical Frobenius norm:

    \[
        \big\|\widehat{Y}^{\mathrm{THA}}- \widehat{Y}^{\mathrm{full}}\big\|_{F,N}^2 = \big\|\Delta_{\text{Bake}} + \Delta_{\text{Bias}}\big\|_{F,N}^2.
    \]
    
Expanding the squared norm using the associated inner product \(\langle \cdot, \cdot \rangle_{F,N}\):

    \begin{align*} \big\|\Delta_{\text{Bake}} + \Delta_{\text{Bias}}\big\|_{F,N}^2 &= \|\Delta_{\text{Bake}}\|_{F,N}^2 + \|\Delta_{\text{Bias}}\|_{F,N}^2 + 2 \big\langle \Delta_{\text{Bake}}, \Delta_{\text{Bias}} \big\rangle_{F,N}. \end{align*}
    
By the definition of Baking Alignment, \(C_{\text{Align}} = \langle \Delta_{\text{Bake}}, -\Delta_{\text{Bias}} \rangle_{F,N}\). By linearity of the inner product, the cross-term is

    \[
        2 \big\langle \Delta_{\text{Bake}}, \Delta_{\text{Bias}} \big\rangle_{F,N} = -2 C_{\text{Align}}.
    \]
    
Substituting this back into the expansion yields the theorem statement.
\end{proof}
\subsection{Proof of Theorem~\ref{thm:triangle_bound}}
\label{app:proof_thm2}
\begin{proof}
We decompose the difference between the THA prediction and the full MVMALS prediction. Since \(\sum_{k=1}^K \omega_k = 1\), we introduce the truncated full model terms \(U P_{S_k}B^\star\):

    \begin{align*}
        \widehat{Y}^{\mathrm{THA}}-\widehat{Y}^{\mathrm{full}} &= \sum_{k=1}^K \omega_k U\widehat{B}^{(k)} - U B^\star \\
        &= \sum_{k=1}^K \omega_k \left(U\widehat{B}^{(k)} - U P_{S_k}B^\star\right) + \left(\sum_{k=1}^K \omega_k U P_{S_k}B^\star - U B^\star\right).
    \end{align*}
    
The first term is the Empirical Baking Adjustment \(\Delta_{\text{Bake}}\). The second term is the Empirical Coverage Bias \(\Delta_{\text{Bias}} = U(C_\omega - I)B^\star = -UT\). We apply the triangle inequality:

    \[
        \big\|\widehat{Y}^{\mathrm{THA}}-\widehat{Y}^{\mathrm{full}}\big\|_{F,N} \le \big\|\Delta_{\text{Bake}}\big\|_{F,N} + \big\|\Delta_{\text{Bias}}\big\|_{F,N}.
    \]

\textit{Bounding the Estimation Gap:} By the convexity of the norm and Jensen's inequality (as established in Section 4.2.3),

    \[
        \big\|\Delta_{\text{Bake}}\big\|_{F,N} \le \left(\sum_{k=1}^K \omega_k (\varepsilon'_k)^2\right)^{1/2}.
    \]

\textit{Bounding the Bias Term:} We bound the norm \(\|UT\|_{F,N}\).
    \[
        \|UT\|_{F,N}^2 = \frac{1}{N} \|UT\|_F^2 = \frac{1}{N} \mathrm{Tr}(T^\top U^\top U T) = \mathrm{Tr}(T^\top G T).
    \]
    
Let \(T_j\) be the \(j\)-th column of \(T\).

    \[
        \mathrm{Tr}(T^\top G T) = \sum_{j=1}^l T_j^\top G T_j.
    \]
    
Using the Rayleigh quotient property for the positive semidefinite matrix \(G\), we have \(T_j^\top G T_j \le \lambda_{\max}(G) \|T_j\|_2^2\).

    \begin{align*}
        \mathrm{Tr}(T^\top G T) &\le \lambda_{\max}(G) \sum_{j=1}^l \|T_j\|_2^2 = \lambda_{\max}(G) \|T\|_F^2.
    \end{align*}

Taking the square root yields the bound on the bias term. Combining the bounds completes the proof.
\end{proof}

\subsection{Proof of Theorem~\ref{thm:improved_bound}}
\label{app:proof_thm3}
\begin{proof}
We start from the exact geometric decomposition established in Theorem~\ref{thm:geometric_decomposition}:

    \[
        \big\|\widehat{Y}^{\mathrm{THA}}-\widehat{Y}^{\mathrm{full}}\big\|_{F,N}^2 = \|\Delta_{\text{Bias}}\|_{F,N}^2 + \|\Delta_{\text{Bake}}\|_{F,N}^2 - 2\ C_{\text{Align}}.
    \]

We utilize the convex bound on the magnitude of the baking adjustment derived in Section 4.2.3:

    \[
        \|\Delta_{\text{Bake}}\|_{F,N}^2 \le \sum_{k=1}^K \omega_k (\varepsilon'_k)^2.
    \]

Substituting this inequality into the geometric decomposition yields the theorem statement.
\end{proof}

\end{document}